\documentclass{article}
\usepackage[preprint]{log_2026}			% for preprint version

\usepackage{booktabs}						% professional-quality tables
\usepackage{multirow}						% tabular cells spanning multiple rows
\usepackage{amsfonts}						% blackboard math symbols
\usepackage{graphicx}						% figures

\usepackage{amsthm}

\usepackage{algorithm}
\usepackage{algpseudocode}

\usepackage{adjustbox}
\usepackage{subcaption}
\usepackage[numbers,compress,sort]{natbib}	% for numerical citations

\usepackage{bbm}

\usepackage{mathtools}
\newtheorem{theorem}{Theorem}
\newtheorem{corollary}{Corollary}
\newtheorem{lemma}{Lemma}
\newtheorem{proposition}{Proposition}
\newtheorem{definition}{Definition}

\newcommand{\autograble}{\textsc{AutoGrable}\xspace} 
\newcommand{\C}{\mathcal{C}}          \newcommand{\val}{\mathrm{val}}

\newcommand{\Ggamma}[1]{G^{#1}_{C,T}}
  \newcommand{\ginc}{\gamma_{\mathrm{inc}}}

\newcommand{\SCS}{\textsc{SCS}\ }

\title[AutoGrable: What Is a Good Graph for a Table?]{AutoGrable: What Is a Good Graph for a Table?}

\author{%
Tamara Cucumides \\
University of Antwerp \\
\email{tamara.cucumidesfaundez@uantwerp.be}\And
Floris Geerts\\
University of Antwerp \\
\email{floris.geerts@uantwerp.be}
}

\begin{document}

\maketitle

\begin{abstract}
Graph learning presupposes a graph, and tables and relational databases do not come with one. Applying a GNN to them requires deciding which entities become nodes, which of them to connect, and through which relations---a decision made by hand, by schema heuristics, or by training a model on every candidate graph and keeping the best. We give a criterion that requires no trained graph model. In the minimal table-to-graph abstraction each row is a node, so a message-passing GNN, bounded by 1-WL, sees a construction only as a partition of the rows into colour-refinement classes: a construction is good for a task when that partition separates rows with different labels and does not split rows that share one. \autograble turns this criterion into a construction procedure. For incidence constructions the partition is fixed by the selected columns, so building a graph reduces to choosing them, and we score a candidate subset by a label-alignment risk: the held-out risk of the best predictor constant on its blocks, penalised by an occupancy term measuring how thinly the blocks are populated. The score materialises no graph and trains no GNN, so \autograble can search the space of subsets greedily and cheaply, and returns the resulting \emph{grable} for single tables and for foreign-key schemas alike. Our experiments show that over a space of candidate graphs the score discards a large fraction while retaining the best; that \autograble recovers the columns that generate the label on controlled tasks and outperforms fixed, random, and task-aware constructors on real tasks under a fixed predictor; and that it is the only method compared that can decline to build a graph
when none helps.
\end{abstract}

\section{Introduction}
Graph learning presupposes the existence of a single or multiple graphs. The graph is where the inductive bias lives: Message-passing can only relate two entities if an edge, or a path of edges, connects them. Recent work has applied graph learning techniques to the data formats that dominate applied machine learning-- single tables and relational databases linked by foreign keys-- but these do not come with a graph representation. In order to bring a graph model to such data we must first decide what the nodes are, which entities to connect, and through which relations. This decision is usually made by hand, schema heuristics, or not at all, and it is made \emph{before} any learning happens. 

The question this paper takes up is the one that precedes architecture and optimization entirely: \textbf{when the graph is not given, what is a good graph to learn on, and how do we build it?}

Two responses to this question exist, yet neither answers it fully. One \emph{learns} a graph: taking the data points as nodes, it infers a soft adjacency from their features jointly with the model~\citep{franceschi, kazi2022dgm, zhou2023opengsl}. The other \emph{constructs} one from a table or schema, fixing the foreign-key skeleton, searching schema edits, or scoring attributes, and judges it by the accuracy of a model trained on it~\citep{RDL-fey, autog, augraph}. The first assumes structure means geometric proximity over given nodes, which a table does not provide: its relation is shared categorical values, and its node set is itself a construction choice. The second defines a good graph only operationally, good if a trained model likes it, which is circular, costs a training run per candidate, and says nothing about \emph{which distinctions a graph exposes to the learner}. We seek a notion of a good graph that can be stated before training and that explains why a construction helps.

In this work, we judge the quality of a graph representation by the distinctions the model running on it can make. Message-passing GNNs are bounded by the one-dimensional Weisfeiler--Leman (1-WL) test, or colour refinement~\citep{xu2018how,morris-WL-go-neural}: a GNN cannot tell two nodes apart when 1-WL assigns them the same colour, that is, when their neighborhoods are indistinguishable under iterated aggregation, however near or far apart the two nodes sit. For such a model, then, a graph constructed from a table does just one thing: it sorts the table rows (row nodes in the graph) into 1-WL colour classes, and only distinctions between colour classes are visible to the learner. A graph is good for a task when this sorting lines up with the labels, rows of different labels fall in different classes, and rows of the same label are not split further than necessary. Too coarse a sorting hides the labels; too fine a one lets the model memorize individual rows instead of generalizing. This gives a criterion for constructing a graph that can be stated, and checked, before any model is trained.
 
We turn this target into an algorithm. Scoring a candidate graph is cheap: because the target only concerns which rows are grouped together, it can be measured directly from that grouping, with no model training. Yet, searching for the best graph is not cheap, the closely related separation problem is NP-hard, so we build the graph greedily rather than by enumeration. The result is \textsc{AutoGrable}, which selects a construction by this training-free target and then trains a graph model on it. The algorithm naturally extends from a single table to several tables linked by foreign keys.

\textbf{Contributions.}
We propose a criterion for what makes a graph good for a task, and apply it to tabular learning, where the graph is not given but has to be constructed. Our contributions are (1) \emph{alignment} as a criterion of graph goodness: a construction reaches a 1-WL-bounded learner only as a grouping of the rows, and is good for a task when that grouping matches the labels; (2) a score for this criterion, computed from the grouping alone with no model trained, weighing the error of the best predictor the grouping admits against how thinly it spreads the rows; and (3) \autograble, a table-to-graph constructor that builds the graph the score selects, applies to single- and multi-table datasets, and
outperforms alternative constructions on transactional and relational
benchmarks.
% \autograble\ recovers planted generating columns on controlled tasks and, under a fixed predictor, beats fixed, random and task-aware constructors on transactional, relational and i.i.d.\ benchmarks, including by building no graph when none helps.
\section{Attribute selection as label alignment}
\label{sec:preliminaries}
Our approach rests on one observation: selecting a subset of attributes and expanding its values into shared nodes of an incidence graph induces, on the rows of the table, exactly the colour-refinement (1-WL) partition of that graph -- and colour refinement is precisely what limits the power of message-passing GNNs (Section~\ref{sec:autograble}). Choosing attributes therefore fixes the expressive ceiling of every GNN run on the constructed graph, and the choice can be made entirely in table space, even though what the choice produces, cross-row communication, is not available in table space at all. 

We thus focus first on how to select the attribute subset. The guiding notion is that its induced partition should be \emph{aligned} with the labels: fine enough that cells are label-homogeneous, yet coarse enough that each cell retains the support needed to estimate anything on it. This section makes alignment precise. We attach to every candidate attribute set a partition of the training rows and a training-free predictor, and score the pair by an objective that trades held-out predictive gain against fragmentation, with a generalisation bound as justification.

\textbf{Tables and candidate attributes.}
Let $\C$ be a universe of attribute names and $\val$ a value domain. A
\emph{schema} is a finite set $C\subseteq\C$, a \emph{$C$-row} is a map
$r\colon C\to\val$, and a \emph{$C$-table} is a finite indexed collection of
rows. We write $r[c]$ for the value of $c$ in $r$, and $r|_S$ for the
restriction of $r$ to $S\subseteq C$. One attribute $Y\in C$ is the label,
$A:=C\setminus\{Y\}$ are the input attributes, and $F\subseteq A$, fixed
before the validation sample is inspected, contains the attributes eligible
for structural selection. The labelled rows are split into disjoint training
and validation samples $T_{\mathrm{tr}}$ and $T_{\mathrm{val}}$ of sizes
$n_{\mathrm{tr}}$ and $n_{\mathrm{val}}$.
Every candidate set $S\subseteq F$ partitions the training rows by
projection:
\begin{equation*}
    r\sim_S r'
    \quad\Longleftrightarrow\quad
    r|_S=r'|_S,
    \qquad
    \pi_S:=T_{\mathrm{tr}}/{\sim_S}.
    \label{eq:selected-attribute-equivalence}
\end{equation*}
Equivalently, $\pi_S$ consists of the nonempty cells
$B_{S,u}:=\{r\in T_{\mathrm{tr}}:r|_S=u\}$ for $u\in\val^S$, with
$N_{S,u}:=|B_{S,u}|$. We write $\pi'\preceq\pi$ when $\pi'$ refines $\pi$;
in particular $S\subseteq S'$ implies $\pi_{S'}\preceq\pi_S$.

\textbf{A training-free block predictor.}
Each candidate set is scored with the simplest predictor compatible with its
partition. Writing
$\widehat p_S(y\mid u):=N_{S,u}^{-1}\sum_{r\in T_{\mathrm{tr}}}
\mathbf 1\{r|_S=u,\ r[Y]=y\}$ and
$\widehat p_0(y):=n_{\mathrm{tr}}^{-1}\sum_{r\in T_{\mathrm{tr}}}
\mathbf 1\{r[Y]=y\}$, we set
$\widehat h_S(r):=\widehat p_S(\,\cdot\mid r|_S)$ if $N_{S,r|_S}>0$ and
$\widehat h_S(r):=\widehat p_0$ otherwise: the empirical label distribution on
occupied cells, the training marginal on unseen projections. This predictor
serves only to select a structure; the final predictor is a graph-learning
model (Section~\ref{sec:autograble}).

\textbf{The alignment score $\mathcal J$.}
Alignment must balance two failure modes. An under-refined partition mixes
differently labelled rows in one cell, which no cell-constant predictor can
resolve; an over-refined one captures more label variation but produces
small, poorly supported cells. For a subset $S\subset F$ of attributes, we measure fragmentation by
\begin{equation*}
    \Omega(T_{\mathrm{tr}},\pi_S)
    :=
    \frac{1}{n_{\mathrm{tr}}}
    \sum_{u:N_{S,u}>0}\sqrt{N_{S,u}},
    \label{eq:main-occupancy}
\end{equation*}
\begin{figure}[t]
    \centering
    \includegraphics[width=\linewidth]{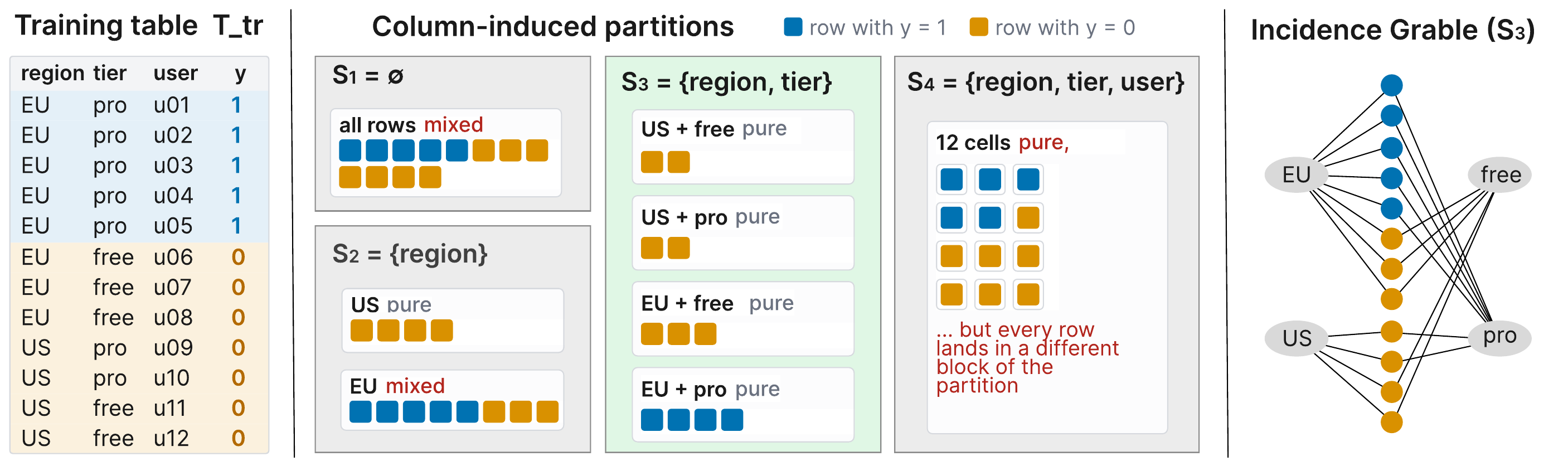}
    \caption{A candidate set $S$ partitions the training rows by their
    projection $r|_S$. Under-refinement leaves differently labelled rows in
    the same cell; over-refinement produces sparsely supported cells. After
    selecting $S^\star$, the chosen values become shared value nodes. %and the remaining attributes features of the row nodes.
    }
    \label{fig:alignment}
\end{figure}
which is monotone under refinement and ranges from
$n_{\mathrm{tr}}^{-1/2}$ (a single cell) to $1$ (all singletons).
Let
$\widehat{\mathrm{Risk}}_{\mathrm{val}}(\widehat h_S)$ be the average
validation loss of $\widehat h_S$ for some bounded loss function $\ell$. We score $S$ by
\begin{equation}
    \mathcal J(\pi_S)
    :=
    \widehat{\mathrm{Risk}}_{\mathrm{val}}(\widehat h_S)
    +
    \lambda\,\Omega(T_{\mathrm{tr}},\pi_S),
    \qquad \lambda\geq 0.
    \label{eq:main-alignment-score}
\end{equation}
The first term rewards distinctions that track the labels on held-out
rows; the second charges for distinctions supported by too few training
rows. Minimising $\mathcal J$ therefore selects the partition best aligned
with the labels: an attribute enters only when its held-out gain outweighs
the fragmentation it introduces.
The penalty is principled rather than heuristic. For binary classification,
let $\widehat g_S$ be the empirical majority classifier on the cells of
$\pi_S$, $\mathrm{Risk}^{\star}$ the unrestricted optimal population risk,
and $\mathrm{Risk}^{\star}_S$ the best population risk among classifiers
constant on those cells. For $S$ fixed independently of $T_{\mathrm{tr}}$,
with probability at least $1-\delta$,
\begin{equation*}
    \mathrm{Risk}(\widehat g_S)-\mathrm{Risk}^{\star}
    \leq
    \bigl(\mathrm{Risk}^{\star}_S-\mathrm{Risk}^{\star}\bigr)
    +\Omega(T_{\mathrm{tr}},\pi_S)
    +4\sqrt{\frac{\ln(4/\delta)}{2n_{\mathrm{tr}}}}.
    \label{eq:main-occupancy-bound}
\end{equation*}
Thus $\Omega$ bounds the estimation term in a generalisation bound:
refinement can lower approximation error, but raises the observable quantity
controlling estimation. Moreover, conditioning on $T_{\mathrm{tr}}$,
concentration on the validation sample is uniform over all $2^{|F|}$
candidates: with $\varepsilon_L
    :=
    L\sqrt{
        \frac{|F|\ln 2+\ln(2/\delta)}{2n_{\mathrm{val}}}
    }$,
an exact minimiser $S^\star$ of \eqref{eq:main-alignment-score} satisfies,
with probability at least $1-\delta$,
\[
    \mathrm{Risk}(\widehat h_{S^\star})
    +\lambda\Omega(T_{\mathrm{tr}},\pi_{S^\star})
    \leq
    \min_{S\subseteq F}
    \bigl\{
        \mathrm{Risk}(\widehat h_S)
        +\lambda\Omega(T_{\mathrm{tr}},\pi_S)
    \bigr\}
    +2\varepsilon_L.
\]
Because the concentration event is uniform over all $2^{|F|}$ candidates,
it also covers candidates inspected adaptively during search; if the search
returns an $\eta$-suboptimal empirical minimiser, the oracle inequality
holds with an additional additive term $\eta$. Full statements and proofs
are in \Cref{sec:score-details}.
\section{Finding an aligned attribute set}
\label{sec:autograble-algorithm}
The score $\mathcal J$ specifies which attribute sets are desirable; this
section addresses how to find one. Exact minimisation over all $2^{|F|}$
subsets is provably out of reach: even a requirement much weaker than alignment is already NP-complete.

\textbf{Alignment rather than maximal separation.}
An attribute set $S$ \emph{separates the labels} when
$r[Y]\neq r'[Y]$ implies $r|_S\neq r'|_S$ for all
$r,r'\in T_{\mathrm{tr}}$. Separation is necessary for a cell-constant predictor to fit the training labels, 
but it is not the objective,
 and it is attained by two degenerate choices: taking all attributes separates whenever any subset does, at the price of the finest and least supported partition, while a key-like attribute separates all rows, permitting memorisation while exposing no repeated structure to generalise from.
    
% but it is not the objective: it is attained by two degenerate constructions. Choosing all attributes separates whenever any subset does, and yields the largest (and least desirable) grable; choosing a key-like attribute separates all rows, permitting memorisation while exposing no repeated structure to generalise from.

The desired partition is \emph{aligned} with the task---fine enough to distinguish label-relevant row types, coarse enough to preserve support among rows that should share
information. Even so, under a budget $|S|\leq k$, the separation floor is computationally hard.
\begin{theorem}[Separation is NP-complete]
\label{thm:separate-hard}
Given $T_{\mathrm{tr}}$, $F$, and $k$, deciding whether some $S\subseteq F$
with $|S|\leq k$ separates the labels is NP-complete.
\end{theorem}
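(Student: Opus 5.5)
Membership in NP is the easy half, so we settle it first. Given a candidate $S\subseteq F$ with $|S|\le k$, we check separation by comparing all pairs $r,r'\in T_{\mathrm{tr}}$ with $r[Y]\neq r'[Y]$ and verifying $r|_S\neq r'|_S$. This takes $O(n_{\mathrm{tr}}^2|S|)$ time. Alternatively, we can hash the projections $r|_S$ and check that each cell $B_{S,u}$ is label-pure. Either way, $S$ is a polynomial-size certificate checkable in polynomial time.

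For hardness we reduce from \textsc{Set Cover}, a reduction in the spirit of the classical NP-completeness of finding a minimum key (or minimum set of test attributes). An instance is a universe $U=\{e_1,\dots,e_m\}$, a family $\mathcal S=\{S_1,\dots,S_p\}$ of subsets of $U$, and a budget $k$. We build a table with one binary attribute $c_j\in F$ for each set $S_j$ and a binary label $Y$. The rows are:
\begin{itemize}
\item a single \emph{base} row $r_0$ with $r_0[c_j]=0$ for all $j$ and $r_0[Y]=0$;
\item for each element $e_i$, an \emph{element} row $r_i$ with $r_i[c_j]=1$ iff $e_i\in S_j$, and $r_i[Y]=1$.
\end{itemize}
We take $F=A=\{c_1,\dots,c_p\}$ and keep the same $k$. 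The construction is clearly polynomial. We may assume every element lies in some set; otherwise the \textsc{Set Cover} instance is trivially a no-instance and we output a fixed no-instance.

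The correctness argument has two directions.
\begin{itemize}
\item \emph{Pairs that need checking.} The only pairs with different labels are $(r_0,r_i)$, since rows $r_i,r_{i'}$ share label $1$ and impose no constraint. So $S$ separates the labels iff for every $i$ there is $c_j\in S$ with $r_i[c_j]\neq r_0[c_j]=0$, i.e.\ with $e_i\in S_j$.
\item \emph{Equivalence with covers.} By the previous item, $S$ separates the labels exactly when $\{S_j:c_j\in S\}$ covers $U$, and the budget $|S|\le k$ corresponds to a cover of size at most $k$.
\end{itemize}
This gives the equivalence of the two instances.

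The main obstacle is modelling: the definition must fit the paper's formalism. Three points need care.
\begin{itemize}
\item \emph{Multiset rows.} Tables are finite indexed collections of rows, so duplicate rows are allowed in principle. Our reduction has no two identical rows, so this causes no problem. The one concern is an element row that coincides with $r_0$, which happens exactly when the element is uncovered; the assumption above excludes this.
\item \emph{The eligible set $F$.} $F$ is allowed to be all of $A$.
\item \emph{Matching the problem size.} The problem is parameterised by $k$ given in the input, so a direct budget-preserving reduction suffices.
\end{itemize}
If an alternative is preferred, \textsc{Hitting Set} or \textsc{Vertex Cover} (one attribute per vertex, one label-$1$ row per edge marking its endpoints) yields the same construction.
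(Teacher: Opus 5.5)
Your proof is correct, but it takes a different route from the paper. The paper reduces from Test Cover. It builds one row per item and one binary column per test, and gives every row its own distinct label. Label-separation then means separating \emph{all} pairs of rows, which is exactly the test-cover condition. Its membership argument is the same pairwise check you give.

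You reduce instead from Set Cover, using a single all-zero base row labelled $0$ and one row per element labelled $1$. The only differently labelled pairs then form a star around $r_0$, and separating them is the same as covering the universe.

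Your version shows hardness already holds with a binary label. Via your Vertex Cover variant, it also holds with at most two nonzero entries per row. Binary labels are the regime the paper actually works in: Lemma~\ref{lem:occ-bound} and all experiments are binary classification. The paper's instances, by contrast, need $n_{\mathrm{tr}}$ distinct labels.

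The paper's version matches directly its view of separation as covering the set of differently labelled pairs, which for all-distinct labels is literally Test Cover. Both reductions preserve the optimal value exactly, so either one also transfers approximation hardness from its source problem.

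Two of your side remarks are off, though harmlessly:
\begin{itemize}
\item Element rows for elements lying in exactly the same sets \emph{are} identical rows. This is fine, since tables are indexed collections and same-label pairs impose no constraint.
\item The assumption that every element is covered is unnecessary. An uncovered element gives a row that agrees with $r_0$ on every attribute, so no $S$ separates the labels, which correctly yields a no-instance.
\end{itemize}
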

Unless $\mathrm{P}=\mathrm{NP}$, no polynomial-time exact method solves
even this restricted selection problem, let alone the alignment objective.
We therefore use greedy local search.
\algrenewcommand\algorithmicrequire{\textbf{Input:}}
\algrenewcommand\algorithmicensure{\textbf{Output:}}

\begin{algorithm}[t]
\caption{SCS: structural column selection}
\label{alg:autograble}
\begin{algorithmic}[1]
\Require $T = T_{\mathrm{tr}} \cup  T_{\mathrm{val}}$, label $Y$, candidates $F \subseteq A$
  ,
  direction $\in \{\mathrm{fwd},  \mathrm{bwd}\}$, signature $\sigma\in\{\mathrm{val},\mathrm{freq}\}$, $\lambda \ge 0$, tolerance $\tau \ge 0$
\Ensure selected columns $S^\star$
\If{$\sigma = \mathrm{freq}$}
    \State $T \gets \textsc{FreqEncode}(T, F)$ \quad// via \eqref{eq:freqencode}
\EndIf
\If{direction $= \mathrm{fwd}$}
  \State $S \gets \emptyset$;\quad
         $\mathrm{moves}(S) := \{\, S \cup \{c\} : c \in F \setminus S \,\}$
\Else
  \State $S \gets F$;\quad
         $\mathrm{moves}(S) := \{\, S \setminus \{c\} : c \in S \,\}$
\EndIf
\Repeat
  \State $S' \gets \arg\min_{S'' \in \mathrm{moves}(S)}\ \mathcal{J}(\pi_{S''})$
         \quad// via \eqref{eq:main-alignment-score}
  \If{$\mathcal{J}(\pi_{S}) - \mathcal{J}(\pi_{S'}) > \tau$}
    \State $S \gets S'$
  \Else\ \textbf{break}
  \EndIf
\Until{$\mathrm{moves}(S) = \emptyset$}
\State \Return $S^\star \gets S$
\end{algorithmic}
\end{algorithm}

\textbf{Greedy search.}
\Cref{alg:autograble} (Structural column selection, SCS) performs local search on the refinement lattice of column subsets: at each step it evaluates every one-column neighbour of the current set---one addition or removal, controlled by chosen direction---and moves to the neighbour with the lowest $\mathcal J$. Forward search starts from the coarsest partition ($S=\emptyset$) and is cheap, terminating as soon as no single addition clears the tolerance; it is the right choice when few columns are expected to matter, but it can reject a column that becomes informative only jointly with another. Backward search starts from the
finest available partition ($S=F$) and is more robust to such
interactions, since an informative pair is present from the start rather than added one column at a time, at the cost of evaluating more candidates
and relying on $\lambda$ to prune the extra columns it keeps. The tolerance $\tau$ sets the minimum improvement in $\mathcal J$ the search will act on, so it does not chase differences attributable to validation noise; $\tau=0$ recovers standard greedy descent. The signature $\sigma$ controls the initial table encoding (values or frequency), which is explained below. Because $\mathcal J$ requires no model fit---each evaluation is one group-by over $T_{\mathrm{tr}}$ and one pass over $T_{\mathrm{val}}$---a full run costs $O\bigl(|F|^2(n_{\mathrm{tr}}+n_{\mathrm{val}})\bigr)$ in the worst case. The uniform validation event covers the adaptively returned subset, and the empirical optimisation gap $\eta$ of greedy search enters the oracle inequality only additively. 

\paragraph{What the search scores.}
\SCS\ may be run on the table $T$ directly ($\sigma$=val) or on the
frequency-recoded copy  $\textsc{FreqEncode}(T,F)$ ($\sigma$=freq), which has the same rows and columns as $T$ and entries
\begin{equation}\label{eq:freqencode}
  \textsc{FreqEncode}(T,F)[r,c] := \bigl|\{\, r' \in T : r'[c] = r[c] \,\}\bigr|,
  \qquad c \in F .
\end{equation}
leaving the label and the attributes outside $F$ unchanged, so each cell in a candidate column is replaced by the number of times its value occurs in that column. This changes what a column is scored on: under value encoding, a column scores well when particular values track the label; under frequency encoding, when the multiplicity pattern it induces does---whether a value is rare or shared, not which value it is. The second is the weaker signal, and deliberately so: it selects columns for the structure they expose rather than for the labels their values happen to carry. This affects the score alone, since the constructor expands
the selected columns by their original values in either case. 

\textbf{Extension to relational databases.}
\label{sec:autograble-relational}
In the multi-table setting the subset search is unchanged; only the table changes. We materialise $\widetilde T:=T\bowtie_d\mathcal D$ by left joins along all foreign-key paths of length $< d$ and evaluate candidates on $\widetilde T$. Each expanded row retains the identifier of its originating primary row. To deal with join multiplicity, duplicate appearances of a primary row are collapsed within each cell, and if
row $i$ then occurs in $k_i(S)$ distinct cells, each appearance receives weight $1/k_i(S)$, so every primary row contributes one total unit. The selected attributes are then added to the graph: each becomes value nodes attached to the rows of the table it belongs to, and the foreign-key path from $T$ to that table is materialised. The skeleton is carried in by selection rather than fixed in advance, so selecting nothing returns the trivial grable ($\gamma_{\mathrm{triv}}$). 

\section{\autograble}
\label{sec:autograble}
\autograble is our table-to-graph constructor. Given a labelled table $T$
and candidate attributes $F$, it runs \SCS
(\Cref{sec:autograble-algorithm}) to obtain $S^\star$, then expands the
selected columns into a graph, the \emph{incidence grable}
$G_{S^\star}(T)$ (see Figure \ref{fig:alignment}), on which any row-level graph-learning method can be trained. This section supplies the graph-side vocabulary, grables and colour refinement, and proves the correspondence announced in \Cref{sec:preliminaries}: the partition $\pi_{S^\star}$ that \SCS optimises through group-bys is exactly the structural row partition of
$G_{S^\star}(T)$, the partition that bounds message-passing GNNs.

\textbf{Grables.}
A \emph{constructor} $\gamma$ maps a table to a typed, attributed graph
$\Ggamma{\gamma}$ with one distinguished \emph{row node} $v_r$ per row
$r$; the result is a \emph{grable}~\citep{grables}.\footnote{The trivial grable $\gamma_{\textsl{triv}}$ is the simplest grable, only consisting of row nodes and no edges.} A row-level GNN
predicts at the row nodes, so it can exploit information from other rows
only through the edges the constructor supplies. In this way a grable
turns a \emph{row-local} predictor, one whose output for a row is
unchanged when all other rows are removed, into an
\emph{extension-sensitive} one~\citep{grables}.

\textbf{Colour refinement.}
What a message-passing GNN can distinguish is governed by colour
refinement (or 1-WL). For a typed, attributed graph $G$, let $\chi_G^{(0)}(v)$
encode the initial node type and attributes exposed to the GNN, and
iterate $
    \chi_G^{(t+1)}(v)
    =
    \operatorname{HASH}\!\left(
        \chi_G^{(t)}(v),
        \{\!\!\{
            (\operatorname{type}(v,w),\chi_G^{(t)}(w))
            :w\in N_G(v)
        \}\!\!\}
    \right)$
with $\operatorname{HASH}$ injective. On a finite graph the induced
partition stabilises; $\pi_{\mathrm{CR}}(G)$ denotes its restriction to
the row nodes. Two row nodes with the same stable colour cannot receive
different outputs from any message-passing GNN, and sufficiently
expressive injective architectures attain this
bound~\citep{xu2018how,morris-WL-go-neural}. The partition
$\pi_{\mathrm{CR}}(G)$ is thus the expressive ceiling of graph learning
on $G$.

\textbf{The incidence grable.}
For $S\subseteq A$, the incidence grable $G_S(T):=\Ggamma{\ginc^S}$
contains
\begin{enumerate}
    \item one row node $v_r$ per $r\in T$, carrying the unexpanded
          attributes $r|_{A\setminus S}$; and
    \item one value node $u_{c,a}$ per occurring typed value $(c,a)$ with
          $c\in S$, whose initial feature identifies $(c,a)$.
\end{enumerate}
An edge of type $c$ joins $v_r$ and $u_{c,a}$ exactly when $r[c]=a$; the
label $Y$ is never exposed. Rows that agree on a selected value are thus
wired to a common value node, while the remaining attributes stay local
to their row. The typed-value encoding matters: were literal values
hidden, distinct values with identical structural profiles could remain
colour-equivalent.
To isolate the structure contributed by $S$ from the features that happen
to sit on the rows, let $G_S^{\circ}(T)$ be the
\emph{row-feature-erased reduct} of $G_S(T)$: all row nodes receive the
same initial colour, while value-node features and edge types are
retained. The reduct is an analysis device only.
\begin{lemma}
\label{lem:signature}
For all $r,r'\in T_{\mathrm{tr}}$,
% \[
%     \chi_{G_S^{\circ}(T_{\mathrm{tr}})}^{(\infty)}(v_r)
%     =
%     \chi_{G_S^{\circ}(T_{\mathrm{tr}})}^{(\infty)}(v_{r'})
%     \quad\Longleftrightarrow\quad
%     r|_S=r'|_S,
% \]
% and hence
$\pi_{\mathrm{CR}}\bigl(G_S^{\circ}(T_{\mathrm{tr}})\bigr)=\pi_S$.
\end{lemma}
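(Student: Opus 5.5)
We prove the two inclusions between the partitions separately, writing $G:=G_S^{\circ}(T_{\mathrm{tr}})$ and $\chi^{(t)}:=\chi_G^{(t)}$. The key structural facts about $G$ are these. Every row node has the same initial colour. Every value node $u_{c,a}$ has an initial colour that injectively identifies the pair $(c,a)$, and this colour differs from the row colour. Each row node $v_r$ has exactly one neighbour along each edge type $c\in S$, namely $u_{c,r[c]}$. The graph is bipartite between row nodes and value nodes.

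\emph{Coarsening direction: $r|_S\neq r'|_S$ implies different stable colours.} Suppose $r[c]\neq r'[c]$ for some $c\in S$. At round $1$, the colour $\chi^{(1)}(v_r)$ is the hash of the common row colour together with the multiset $\{\!\!\{(c',\chi^{(0)}(u_{c',r[c']})):c'\in S\}\!\!\}$. Because initial value colours identify $(c',a)$, this multiset determines the tuple $r|_S$. Since $\operatorname{HASH}$ is injective, $\chi^{(1)}(v_r)\neq\chi^{(1)}(v_{r'})$. Colour refinement only refines, so the stable colours also differ. Hence $\pi_{\mathrm{CR}}(G)\preceq\pi_S$.

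\emph{Refinement direction: $r|_S=r'|_S$ implies equal stable colours.} A direct induction on $t$ over row nodes alone fails, because the shared value node $u_{c,a}$ has neighbours that may differ across rows. The right tool is a colour-preserving automorphism argument. If $r|_S=r'|_S$, then $v_r$ and $v_{r'}$ have identical neighbourhoods: they are adjacent to the same value nodes via the same edge types. Consider the map $\phi$ that swaps $v_r$ and $v_{r'}$ and fixes every other node. It preserves initial colours, since both are row nodes and row features are erased. It preserves typed adjacency, since the two nodes are twins and there are no row–row edges. So $\phi$ is an automorphism of the typed, coloured graph. Colour refinement is invariant under automorphisms: by induction on $t$, $\chi^{(t)}(\phi(v))=\chi^{(t)}(v)$ for all $v$ and $t$. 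It follows that $\chi^{(\infty)}(v_r)=\chi^{(\infty)}(v_{r'})$, which gives $\pi_S\preceq\pi_{\mathrm{CR}}(G)$.

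Combining the two inclusions yields $\pi_{\mathrm{CR}}(G)=\pi_S$, with stabilisation already at round $1$ on row nodes. The main point needing care is the refinement direction, where the twin/automorphism argument replaces a naive induction. A secondary point is checking that typed-value encoding distinguishes value nodes of different columns that carry equal literal values, so that the round-$1$ multiset really determines $r|_S$. This is guaranteed because the initial feature identifies $(c,a)$ rather than $a$ alone.
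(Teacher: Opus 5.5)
Your proof is correct, and your first direction (separation at round one via the typed value colours and injectivity of $\operatorname{HASH}$) is exactly the paper's argument. The converse differs, and the paper uses precisely the step you say fails: a direct induction on the round for the pair $v_r,v_{r'}$. That induction does go through. If $\chi^{(t)}(v_r)=\chi^{(t)}(v_{r'})$ and $N_G(v_r)=N_G(v_{r'})$ with matching edge types, then $\{\!\!\{(\operatorname{type}(v_r,w),\chi^{(t)}(w)) : w\in N_G(v_r)\}\!\!\}$ and its counterpart for $v_{r'}$ are literally the same multiset, because both range over the same nodes $w$. It is irrelevant that a shared value node's colour has been shaped by its other, possibly differing, row neighbours: both rows read that same node and hence the same colour. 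Your worry conflates \emph{same neighbour} with \emph{same-coloured neighbour}. Your twin-swap automorphism argument is valid: swapping twins preserves types, initial colours and typed adjacency, and colour refinement is invariant under such maps. But that invariance is itself this same induction carried out over all nodes at once. It buys a reusable principle, automorphism invariance of colour refinement, at the cost of machinery the lemma does not need, and your stated reason for preferring it should be dropped.
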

% \begin{proof}
% If $r|_S\neq r'|_S$, then $r[c]\neq r'[c]$ for some $c\in S$, so through
% type-$c$ edges $v_r$ and $v_{r'}$ see the distinct value-node colours
% identifying $(c,r[c])$ and $(c,r'[c])$; colour refinement separates them
% after one round. Conversely, if $r|_S=r'|_S$, then $v_r$ and $v_{r'}$
% share the same initial colour and are adjacent to exactly the same value
% nodes through the same edge types; induction on the round gives
% $\chi^{(t)}(v_r)=\chi^{(t)}(v_{r'})$ for all $t$.
% \end{proof}
The lemma closes the loop. \SCS computes $\pi_S$ by a group-by, yet $\pi_S$ is precisely the stable row partition that the selected incidence structure induces once row features are suppressed. In minimising $\mathcal J$ over column subsets, \SCS is therefore
searching over the expressive ceilings of the corresponding
grables---without ever building a graph. Note that the cells are
structural row types, not connected components: rows in different cells
may share value nodes and exchange messages through them.

Selecting $S$ is a table-space operation, but the object it selects is not. By \Cref{lem:signature} the graph adds no distinguishing power over $\pi_S$; what it adds is access, within a cell, to evidence held by other rows---how many share a value, and what those rows carry. Such statistics can be materialised as columns one at a time, but which grouping to count over, and which row-local attribute it should interact with, are task-dependent. Expanding $S^\star$ into structure leaves that choice to the downstream learner rather than fixing it in advance.

\textbf{What the downstream GNN receives.}
The downstream learner receives the full graph $G_{S^\star}(T)$, not the reduct. Its row nodes retain $r|_{A\setminus S^\star}$, which may
separate rows within a structural cell, so
$\pi_{\mathrm{CR}}\bigl(G_{S^\star}(T)\bigr)\preceq\pi_{S^\star}$. The
selected columns fix the cross-row communication channels; the unexpanded attributes supply row-local evidence. The GNN combines both, and is in no way restricted to the block predictor \SCS used for scoring.

\section{Experiments}
\label{sec:experimental-section}
We ask three questions: \textbf{(RQ1)} Does \autograble\ recover the columns that generate a task and reject the rest? \textbf{(RQ2)} Is $\mathcal J$ an indicator of downstream performance for 1-WL-bounded learners? \textbf{(RQ3)} Under a fixed predictor, does \autograble\ build graphs that outperform other table-to-graph constructors? RQ1 requires known ground-truth structure and is answered on controlled tasks; RQ2 and RQ3 are answered on real tabular, transactional, and relational tasks.

% ============================================================
\subsection{Controlled tasks: does alignment select relevant structure?}
\label{sec:exp-controlled}

\textbf{Setup.}
RQ1 is about selection, not prediction, so no model is trained in this section: we run
\autograble{} to its selected subset $S^\star$ and compare that subset against the columns the task was built from. This requires ground truth, which is why the tasks are constructed.

\emph{Data and tasks.} The base table is Census/Adult \citep{adult-dataset}, whose categorical columns form the column universe $C$. A task is a choice of  $S^\star \subseteq C$ together with a label generated from $S^\star$ alone. We include row-local tasks: \emph{single-value}, \emph{conjunction}, and \emph{interaction}
(XOR) and extension-sensitive ones: \emph{count}, and \emph{duplicate}. Full generating processes are in Appendix~\ref{app:rq1-dgp-data-generation}.

\emph{Configurations.} We sweep the axes of \autograble: signature
(\emph{value}, \emph{frequency}), search direction (\emph{forward}, \emph{backward}), and $\lambda$. Table \ref{tab:rq1-main} reports the noise-free tasks at $\lambda \in \{0, 1\}$; the full sweep and the behaviour under label noise are in Appendix~\ref{app:extended-results}. Task instances are drawn once per (family, seed). Each cell is $N=10$ seeds, resampling the rows and the planted columns.

\emph{Metrics.} Let $S^\star$ be the selected columns and $S^{\circ}$ the
set the task was built from. Recovery $\mathrm{Rec} = |S^\star \cap S^{\circ}| /
|S^{\circ}|$ measures how much of the generating set is kept; $\mathrm{Exact} = \mathbbm{1}[S^\star =
S^{\circ}]$ additionally requires that nothing else is selected. We separate failure of the objective from failure of the search: an \emph{objective-metric mismatch} is a run in which $\mathcal J(\pi_{S^{\circ}}) \geq \mathcal J(\pi_{S^\star})$---the search stopped somewhere the score prefers to the generating set, so the score, not the search, is what ranked $S^{\circ}$ below---and a \emph{procedure failure} is a stopping point that is not optimal for $\mathcal J$. Only the second is a shortcoming of the algorithm rather than of the objective.
  
\begin{table*}[t]
\centering
\footnotesize
\caption{Column recovery on synthetic tasks, $N=10$ seeds. Rows show
$\lambda\in\{0,1\}$ (more values in App.~\ref{app:extended-results}). Cell format:
\%exact (\%recovery); $\emptyset$ marks a cell where the selected column
set was empty. A $\dagger$ marks an objective-metric mismatch
against ground truth ($\mathcal J(\text{true}) \geq \mathcal J(\text{reached})$).}
\label{tab:rq1-main}
\begin{tabular}{@{}llc ccc cc@{}}
\toprule
& & & \multicolumn{3}{c}{Row-local} & \multicolumn{2}{c}{Extension-sensitive} \\
\cmidrule(lr){4-6} \cmidrule(lr){7-8}
Encoding & Direction & $\lambda$ & Single-val. & Conj. & XOR & Count & Dup. \\
\midrule
\multirow{4}{*}{Value}
  & \multirow{2}{*}{Backward} & 0 & $0$ ($100$) & $30$ ($100$) & $90$ ($100$) & $0$ ($100$) & $0$ ($100$) \\
  &                           & 1 & $100$ ($100$) & $100$ ($100$) & $100$ ($100$) & $100$ ($100$) & $0$ ($100$) \\
\cmidrule(lr){2-8}
  & \multirow{2}{*}{Forward}  & 0 & $100$ ($100$) & $50$ ($100$) & $100$ ($100$) & $100$ ($100$) & $0$ ($0$) \\
  &                           & 1 & $100$ ($100$) & $50$ ($80$) & $100$ ($100$) & $100$ ($100$) & $0$ ($0$) \\
\midrule
\multirow{4}{*}{Freq.}
  & \multirow{2}{*}{Backward} & 0 & $0$ ($100$)$^\dagger$ & $0$ ($100$)$^\dagger$ & $0$ ($100$)$^\dagger$ & $0$ ($100$) & $0$ ($100$) \\
  &                           & 1 & $\emptyset$ (0)$^\dagger$ & $\emptyset$ (0)$^\dagger$ & $\emptyset$ (0)$^\dagger$ & $0$ (0) & $100$ ($100$)\\
\cmidrule(lr){2-8}
  & \multirow{2}{*}{Forward}  & 0 & $\emptyset$ (0)$^\dagger$ & $\emptyset$ (0)$^\dagger$ & $\emptyset$ (0)$^\dagger$ & 100 (100) & 100 (100) \\
  &                           & 1 & $\emptyset$ (0)$^\dagger$ & $\emptyset$ ($0$)$^\dagger$ & $\emptyset$ ($0$)$^\dagger$ & $100$ ($100$) & $100$ ($100$) \\
\bottomrule
\end{tabular}
\end{table*}

\textbf{Results. }
RQ1 asks whether \autograble recovers the columns that generate a task and rejects the rest. Table~\ref{tab:rq1-main} answers yes on both counts, but the two halves are controlled by different axes: recovery is decided by the signature, rejection by $\lambda$ and the search direction.

\emph{Recovery.} With the value signature \autograble recovers all tasks.
Recovering Count is consistent with it being extension-sensitive: selection only has to find the column, and on a fixed table the partition by that column's values already fixes each row's multiplicity: the graph is needed to compute the target, not to identify the column. Duplicate with forward direction fails for the opposite reason: its column is near-unique, so its value partition is almost all singletons. The frequency signature is more suited to recover it. Symmetrically, frequency encoding returns the empty set on the row-local families (except at backward with $\lambda = 0$), and the $\dagger$ marks confirm $\mathcal J(\emptyset)\le \mathcal J(S^\circ)$ there: frequencies discard value identity, which is what a row-local label depends on, so the score reports that no structure is worth building. Recovery is therefore complete when the signature matches the family, and the failures are informative rather than silent.

\emph{Exact recovery.} A gap between Rec and Exact means the selected set contains the true columns and more, and this is where $\lambda$ acts. It acts mainly on backward search, which keeps more columns than forward. At $\lambda=0$ backward search reaches Rec $=100$ with Exact between $0$ and $90$: the true columns are kept, but so are others. At $\lambda=1$ the gap closes, with Exact $=100$ on four of five families. This is $\Omega$ doing its job. The effect is not uniform: forward search also leaves a gap on Conjunction at $\lambda=0$ and there raising $\lambda$ stops the search earlier rather than dropping the extra column, so Rec falls to $80$ and Exact does not move; under frequency encoding with backward search, raising $\lambda$ collapses the row-local families from Rec $=100$ to the empty set. So $\lambda$ closes the Rec--Exact gap where the search starts too large, and costs Rec where it does not.
  
%\input{tables/table-encoding}

% ============================================================
\subsection{Real tasks: does alignment relate to downstream performance?}
\label{sec:exp-real}

\textbf{Setup. }
RQ2 and RQ3 need different apparatus. RQ2 compares $\mathcal J$ against downstream performance across many graphs for a single task; we take these from RDB2G-Bench
\citep{rdb2g}, which enumerates schema-level constructions for RelBench \cite{relbench-v1, relbench-v2} tasks and reports
trained-model performance for each. RQ3 compares construction methods, one graph per method, on the same task and split.

\emph{Fixed predictor.} We use GraphSAGE throughout, with fixed
architecture, budget and hyperparameters within regime (see Appendix \ref{appendix:details-hyperparameters-real-tasks}) for every construction and every dataset, and no per-constructor tuning. Under $\gamma_{\mathrm{triv}}$ there are no edges, so GraphSAGE reduces to an MLP on the row features. Differences in performance across constructors are attributable to the construction alone.

\emph{Constructors.} $\gamma_{\mathrm{triv}}$ is the no-structure floor. The fixed
constructions are the full incidence grable $\gamma_{\mathrm{inc}}$ \citep{grables},
which exposes every eligible column and so performs no selection, and, for foreign-key
schemas, the relational entity graph (REG) \citep{RDL-fey}, which takes the schema itself
as the graph (defined only in the relational case). The random constructor draws a column
subset uniformly at random in both size and membership, redrawn independently for each seed, and separates the contribution of \emph{which} columns are selected from
that of selecting any columns at all. The task-aware comparison is auGraph \citep{augraph}, which scores attributes individually and has multiple metrics. The one reported corresponds to the one with the best performance in the validation set (\texttt{gnn-gain} on all tasks studied). 

\emph{Datasets.} Three regimes. Transactional: FDB \citep{fraud-benchmark} single tables that are not i.i.d.\ at the table level---rows share device, merchant and account values. Relational: RelBench \citep{relbench-v1,relbench-v2}. Negative control: the i.i.d.\ single-table benchmark TabArena \citep{tabarena}, where each label should depend only on its own row. %, so $\gamma_{\mathrm{triv}}$ should suffice and no constructor should create a significant gain. 

\emph{Protocol.} Learning is transductive: one graph per dataset carrying train, validation and test row masks, so a node may aggregate from rows in other splits while labels are read only within its own. On the timestamped datasets splits are contiguous blocks in time. The neighbour sampler restricts each row to aggregate from rows at or before its own timestamp. For the RDB2G graph constructions, we compute $\mathcal J$ from the partition induced by colour refinement.

\textbf{Results.} $\mathcal J$ acts as a one-sided screen rather than a ranking: low $\mathcal J$ is necessary for strong downstream AUC on these candidates, not sufficient. On \texttt{driver-top3},
retaining the two lowest $\mathcal J$ levels discards $80\%$ of the candidate graphs and all ten of the best-performing graphs survive. The correlations
in Table~\ref{tab:kendall_tau_results} are consistent with this: they are negative throughout, but attenuated by ties, since $\mathcal J$ takes limited distinct values over the constructions (Figure~\ref{fig:scatterplot}) and the AUC values are themselves tightly clustered.
This is a property of the candidate set---RDB2G-Bench enumerates schema-level edits over a fixed key--foreign-key skeleton, so the induced row partitions differ in few ways. The association is weaker on test than on validation, as expected since $\mathcal J$ is also computed on the validation split.

\begin{figure}[t]
    \centering
    % Ensure NO blank lines exist between the first minipage and the second!
    \begin{minipage}[c]{0.55\textwidth}
        \centering
        \captionof{table}{Correlation (Spearman's $rho$ and Kendall's-tau $\tau_b$) between $\mathcal J(\cdot)$ and downstream performance (AUC on validation and test splits).  Graph constructions and performance obtained from RDB2G-Bench, using GraphSAGE as model and all seeds available (5 per graph for \texttt{study-outcome} and 15 for \texttt{driver-top3}).}
        \label{tab:kendall_tau_results}
        \begin{adjustbox}{max width=\linewidth}
        \begin{tabular}{ll cc cc}
    \toprule
    \multicolumn{2}{c}{} & \multicolumn{2}{c}{\texttt{relbench-f1}} & \multicolumn{2}{c}{\texttt{relbench-trial}} \\
    \multicolumn{2}{c}{} & \multicolumn{2}{c}{\texttt{driver-top3}} & \multicolumn{2}{c}{\texttt{study-outcome}} \\
    \cmidrule(lr){3-4} \cmidrule(lr){5-6}
%    \multicolumn{2}{l}{\textit{graphs / levels / max occ.}}
%      & \multicolumn{2}{c}{\textit{722 / L / xx\%}}
%      & \multicolumn{2}{c}{\textit{nnn / L / xx\%}} \\
    \cmidrule(lr){3-4} \cmidrule(lr){5-6}
    Split & $\lambda$ & $\rho$ & $\tau_b$ & $\rho$ & $\tau_b$ \\
    \midrule
    \multirow{2}{*}{Val}
      & 0 & $-0.229$  & $-0.183$ & $-0.458$  & $-0.291$ \\
      & 1 & $-0.235$  & $-0.192$ & $-0.508$  & $-0.372$ \\
    \midrule
    \multirow{2}{*}{Test}
      & 0 & $-0.118$  & $-0.082$ & $-0.287$ & $-0.219$ \\
      & 1 & $-0.158$  & $-0.091$ & $-0.251$ & $-0.203$ \\
    \bottomrule
\end{tabular}
        \end{adjustbox}
    \end{minipage}% <-- This percent sign is crucial to stop a line break
    \hfill% <-- Pushes the figure to the right edge
    \begin{minipage}[c]{0.40\textwidth}
        \centering
        \includegraphics[width=\linewidth]{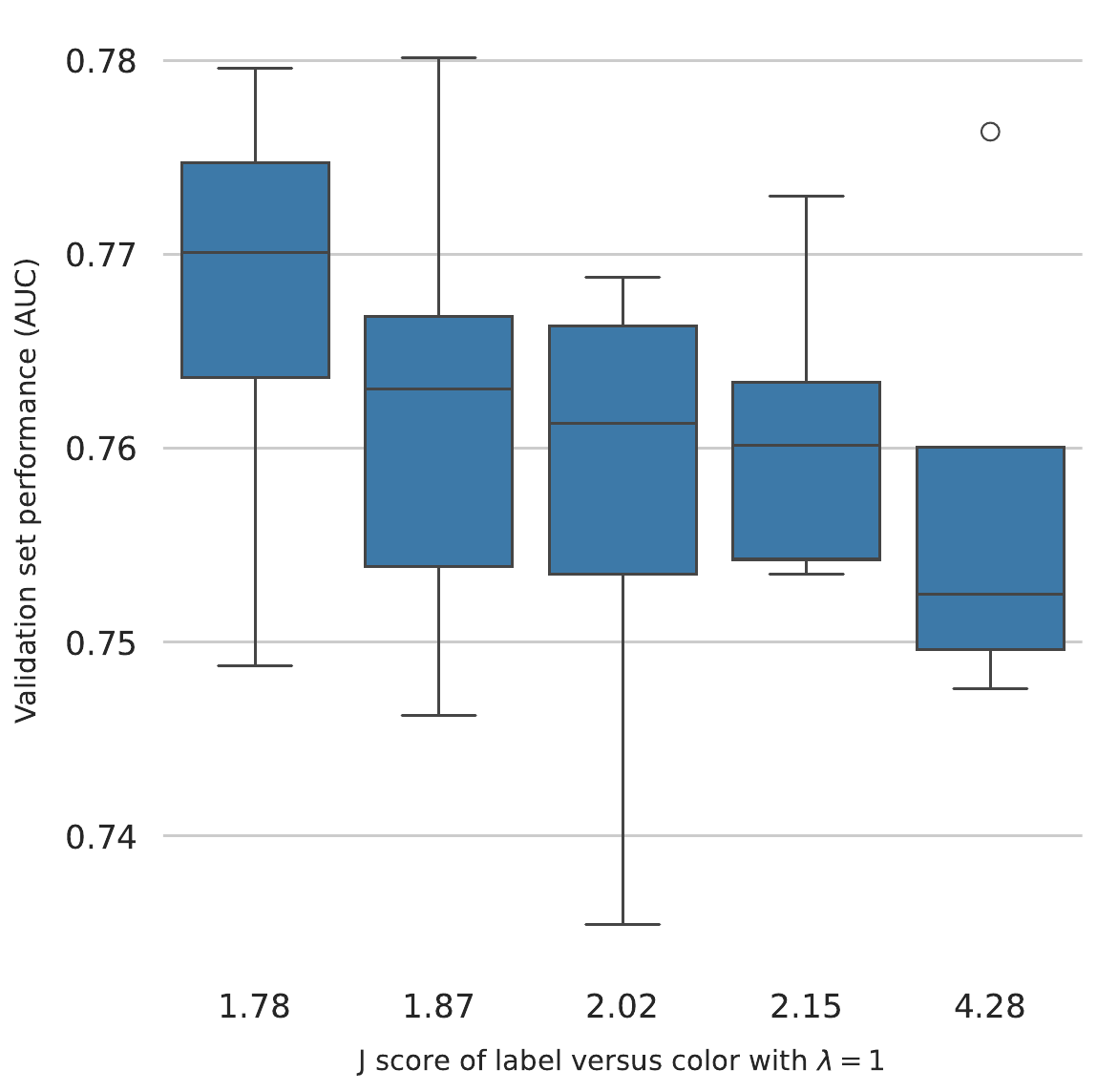}
        \captionof{figure}{$\mathcal J(\text{color}, \text{task})$ vs.  AUC on val set, task \texttt{driver-top3}, $74$ graphs, $15$ seeds per graph}.
        \label{fig:scatterplot}
    \end{minipage}
\end{figure}
% Requires: \usepackage{booktabs, graphicx}  % graphicx for \resizebox
% Optional: \usepackage{adjustbox} for finer width control
\newcommand{\pms}[2]{$#1{\scriptstyle\,\pm\,#2}$}                        % Original
\newcommand{\pmsB}[2]{$\mathbf{#1}{\scriptstyle\,\pm\,#2}$}              % Bold mean only
\newcommand{\pmsU}[2]{$\underline{#1}{\scriptstyle\,\pm\,#2}$}            % Underline mean only
\newcommand{\na}{\text{--}}  

\begin{table}[t]
\centering
\caption{Downstream test AUROC ($\uparrow$) under a \emph{fixed} predictor. Mean\,$\pm$\,std over $N{=}15$ seeds. Best in \textbf{bold},
second \underline{underlined}. \autograble reported results are for fixed setup (forward direction, $\lambda = 1$, $\sigma =$ frequencies). An $*$ marks equivalent constructions. More results in Tables \ref{tab:results-transactions-full} and \ref{tab:full-relbench}}
\label{tab:main-cls}
\resizebox{\textwidth}{!}{%
\begin{tabular}{l cc ccc}
\toprule
 & \multicolumn{2}{c}{Transactional (FDB)} & \multicolumn{3}{c}{Relational (RelBench)} \\
\cmidrule(lr){2-3}\cmidrule(lr){4-6}
Constructor & \texttt{vehicleloan} & \texttt{twitterbot}
& \texttt{driver-top3} & \texttt{driver-dnf} & \texttt{study-outcome} \\
\midrule
Trivial ($\gamma_{\mathrm{triv}}$)
  & \pmsU{0.647}{0.015} & \pms{0.864}{0.012}
  & \pms{0.674}{0.011} & \pms{0.596}{0.026} & \pmsB{0.677}{0.009}* \\
REG
  & \na & \na
  & \pms{0.777}{0.012} & \pms{0.733}{0.013} & \pms{0.635}{0.006} \\
Full incidence ($\gamma_{\mathrm{inc}}$)
  & \pms{0.619}{0.021} & \pms{0.831}{0.027}
  & \pms{0.746}{0.013} & \pms{0.687}{0.022} & \pms{0.621}{0.013} \\
Random
  & \pms{0.620}{0.019} & \pms{0.857}{0.021}
  & \pms{0.759}{0.025} & \pms{0.733}{0.012} & \pms{0.602}{0.022} \\
auGraph
  & \pms{0.631}{0.012} & \pms{0.887}{0.032}
  & \pmsU{0.791}{0.009} & \pmsU{0.742}{0.021} & \pms{0.630}{0.017} \\
\midrule
\autograble
  & \pmsB{0.662}{0.010} &  \pmsB{0.908}{0.007}
  & \pmsB{0.803}{0.011} & \pmsB{0.761}{0.021} & \pmsB{0.677}{0.009}* \\
  %\autograble+
  %& \pmsB{0.662}{0.010} &  \pmsB{0.919}{0.012}
  %& \pmsB{0.803}{0.011} & \pmsB{0.761}{0.021} & \pmsB{0.677}{0.009}* \\
\bottomrule
\end{tabular}%
}
\end{table}

\autograble uses $\mathcal J$ as a filter: it searches for a column subset $\mathcal J$ does not discard, and RQ3 asks whether the resulting graph beats the graphs the alternatives return, under the same predictor. Table~\ref{tab:main-cls} shows that it does, in both the single-table and the multi-table setting. More results for other datasets and tasks can be found in Appendix \ref{app:real-extended-results}.

The baselines fail in three ways. Fixed constructions do not select: exposing every eligible column ($\gamma_{\mathrm{inc}}$) loses to building no graph on three of five tasks, and the schema skeleton (REG) loses to $\gamma_{\mathrm{triv}}$ on \texttt{study-outcome}. Structure is not free, and a column that fragments the rows costs more than the signal it carries. Random selection separates how many columns from which ones: it beats full incidence on some tasks, so part of the gain comes from selecting fewer columns, but it stays behind task-aware selection. \autograble improves on auGraph everywhere, and at a lower cost: auGraph's best-performing metric, \emph{gnn-gain}, uses a GNN to score an attribute by materialising the augmented graph and running it over the validation nodes, which is done for every promotion.

The task \texttt{study-outcome} is special: the row-local predictor is the strongest baseline and every constructor that builds structure loses to it. \autograble returns $\emptyset$ and so $\gamma_{\mathrm{triv}}$ itself: no subset lowers validation risk enough to pay for the fragmentation it introduces. It is the only method compared here that can decline to build a graph. The same behaviour appears on the i.i.d.\ benchmark TabArena~\cite{tabarena}, where each label depends on its own row and structure should not help (App.~\ref{app:tabarena}).

Our code is available at: \url{https://github.com/TamaraCucumides/autoGrable}

%Finally, we consider options where the graph structure is \emph{learned}. This approaches require starting from a set of nodes and learn the adjacency matrix. As a representantive of this type of models we take a model that assigns learnable weights and can \emph{trim} some of the graph edges. We explore the effect of this mechanism on its own and combined with \autograble either as a previous or subsequent step. Figure \ref{} shows 

\section{Related work}
\label{sec:related-work}

\textbf{Expressive power as a design target.}
Message-passing networks separate two nodes only when colour refinement does~\citep{xu2018how,morris-WL-go-neural}; for relational data the canonical route to a graph is the incidence encoding, whose power under $1$-WL is characterised by~\citet{Gro+2020}. This literature is read almost always in one direction: given a graph, how much can a model distinguish, and how can an architecture distinguish more. Read the other way, the same bound is a statement about
capacity. \citet{morris2023wlvc} bound MPNNs' VC dimension by the number of colours $1$-WL produces, and what rules generalisation is whether the induced classes align with the labels rather than how many there are~\citep{li2025towards, maskey2026graph}. We apply this reading to the construction rather than the architecture. With the model class fixed at the $1$-WL bound, a construction reaches the learner only as a partition of the rows, so construction is partition design and finer is not necessarily better.

\textbf{Learning or modifying a graph.}
When no graph is given, one can learn one by  fixing the data points as nodes and inferring connectivity jointly with the predictor~\citep{franceschi,kazi2022dgm,fatemi2021slaps}; see \citet{zhou2023opengsl} for a survey. Two assumptions divide this
setting from ours. The node set is given---a table offers no such points, and what the
row set becomes as a graph, including its value nodes and their typing, is itself the
construction we study. And edge quality is read from the downstream objective, knowable
only by training the predictor; our criterion is a property of the induced row
partition, evaluated before any model is trained. A parallel line modifies a graph that
already exists, adding or deleting edges to improve information flow via
curvature~\citep{topping2022}, spectral gap~\citep{karhadkar2023}, or effective
resistance~\citep{black2023}. Those objectives are label-independent flow proxies over a
given topology; ours is a label-dependent property of a graph that does not yet exist.

\textbf{Graphs from tables and databases.}
Table-to-graph constructions include $k$-nearest-neighbour graphs over rows, row--value
bipartite graphs, and hypergraph variants~\citep{li2025gnn4tdl}; RDL instead fixes the entity graph induced by the key--foreign-key skeleton~\citep{RDL-fey} and develops architectures over it~\citep{chen2025relgnn,dwivedi2026relgt}. An empirical finding is that this skeleton is not the right graph and that not all of it helps: the extraction strategy materially changes performance~\citep{wang2024dbinfer}, the best schema-level graph model beats the standard heuristic by up to $10\%$ while being hard to identify without training~\citep{rdb2g}, and validation performance is an unreliable selector across relational architectures~\citep{chen2026relatron}. Our position is stronger than \emph{select carefully}: the graph adds no distinguishing power, so its value lies in enabling cross-row interactions only when \emph{it pays off}, which is why declining to build one is an admissible output.

\textbf{Automatic construction from tables.}
The closest methods propose a construction and judge it by a trained model. auGraph~\citep{augraph} promotes attributes to nodes by relevance scores, one of which trains a GNN per candidate; AutoG~\citep{autog} has an LLM propose schema transformations and selects among them with an oracle that trains a basket of GNNs. \autograble returns an object of the same class over the same input regime, and its search calls no trained graph model---a candidate costs one group-by and one pass over the validation split. 
%Choosing columns whose partition refines the labels is classically MIN-FEATURES~\citep{almuallim1994}, but the selected columns here become structure rather than features, and the class scored by $J$ is exactly what a $1$-WL-bounded learner can realise on the resulting graph. 

See Appendix~\ref{app:related} for an extended discussion on related work.

\section{Conclusions}
We asked what makes a graph good for a table, and answered it before any model is trained. For a learner bounded by 1-WL, a construction is visible only as a partition of the rows, so the design question is not \emph{how much a construction distinguishes} but \emph{how it groups}. On a fixed table the maximum distinguishing power is already available without any construction, and under the incidence construction the induced partition is the projection onto the selected columns (Lemma~\ref{lem:signature}). Construction therefore reduces to column selection, and the criterion that matters is alignment with the label rather than refinement.

We introduced a score that selects columns exposing label-relevant structure, and turned it into \autograble, which optimises it greedily since the weaker selection problem is NP-complete. Empirically, \autograble\ (i) recovers the relevant columns on controlled tasks, and (ii) produces graphs that match or improve on canonical and task-aware baselines under a fixed predictor, in both the single-table and the multi-table setting. It can also decline to build a graph when no construction pays for the fragmentation it introduces.

%\textbf{Limitations.}
%We study the incidence construction, where the induced partition is the projection onto the selected columns and construction therefore reduces to column selection. Nevertheless, for an arbitrary graph construction the partition is still defined, as the one induced by colour refinement, and alignment can still be stated against it.  We also fix the predictor class at the 1-WL bound, so a more expressive learner may exploit distinctions the partition does not record and that $\mathcal J$ consequently does not score. Finally, our analysis and experiments concern binary classification.

%\section*{Author Contributions}
%\section*{Acknowledgements}

% For natbib users:
\bibliographystyle{unsrtnat}
\bibliography{reference}
% For bibLaTeX users:
% \printbibliography

\newpage
\appendix

\section*{GenAI Usage Statement}
\label{app:genai}

In this work, we used generative AI tools for language editing and basic coding assistance. Specifically, we used Claude (Anthropic) for grammar correction,
copy-editing, and phrasing of the manuscript, and Claude Code (Anthropic) as a coding assistant during implementation and debugging of the \autograble codebase and the experimental pipeline. We have reviewed all AI-assisted work and take responsibility for the final content of this work, including text, claims, or artifacts produced with the aid of generative AI.

\section{Extended related work}\label{app:related}

This appendix expands Section~\ref{sec:related-work} along three axes that the main text compresses: the expressivity--capacity literature underlying $\Omega$, the range of table-to-graph constructions, and a per-invariant comparison against automatic constructors.

\subsection{Expressivity, capacity, and the partition}
Beyond the $1$-WL bound~\citep{xu2018how,morris-WL-go-neural}, \citet{barcelo2020} give the
logical characterisation of what MPNNs express, and \citet{Gro+2020} the analysis
of incidence encodings of relational structures that our $\gamma^{S}_{\mathrm{inc}}$
instantiates. This literature is read in one direction: given a graph, how much can a model distinguish, and how can an architecture distinguish more.

The capacity side of the same bound appears less in construction work.
\citet{morris2023wlvc} bound the VC dimension of MPNNs by the number of colours $1$-WL produces, and \citet{carrasco2026rademachercomplexitygraphneural} link the empirical Rademacher complexity of a class constant on the colour classes by the number of those classes and how the sample distributes across them. The colouring a construction induces is simultaneously what the learner can separate and what it can overfit. It follows that expressive power alone does not determine generalisation
\citep{franks2024margin,maskey2026graph}: what rules it is whether the induced classes align with the labels, a trade-off between intra-class concentration and inter-class separation~\citep{li2025towards}.

This is precisely the spirit of $\mathcal J$: the first term is what refinement buys, the second is what it costs, and the sum is minimised to balance the trade-off. Lemma~\ref{lem:occ-bound} supplies the corresponding statement in our setting, with $\Omega$ as the occupancy-resolved form of the complexity term.

Two differences are worth stating. The results from previous literature diagnose a fixed architecture on a given input, while we use it as the objective for choosing the input. They are also stated at the graph level, over a sample of graphs; our partition is on the row nodes of a single graph.

\subsection{Learning a graph, and modifying one}
Latent graph inference learns a soft adjacency over given nodes, either by a bi-level program over a distribution on edges~\citep{franceschi}, by similarity in a learned embedding~\citep{kazi2022dgm}, by alternating between structure and representation~\citep{chen2020idgl}, or with an auxiliary denoising
objective~\citep{fatemi2021slaps}; \citet{zhu2021gslsurvey} and \citet{zhou2023opengsl}
survey the area and its benchmarks. The presupposition throughout is geometric: an edge
means proximity in some feature space. A table's native relation is exact agreement on a categorical value, which is combinatorial, and its node set is not given. Rewiring
addresses a different failure mode---information flow over an existing topology~\citep{alon2021}, corrected by curvature-guided edge surgery~\citep{topping2022}, spectral-gap maximisation~\citep{karhadkar2023}, or effective-resistance criteria~\citep{black2023}. These criteria are computed from
topology alone and are label-independent, and could be applied downstream of any
constructor, including ours.

\subsection{Table-to-graph constructions}
\citet{li2025gnn4tdl} organise single-table constructions by which tabular elements
become nodes (instances, features, values, cells) and by how edges are created
(intrinsic, rule-based, learned, retrieval-based); our incidence grable is the
row--value intrinsic case. In the multi-table setting the relational entity
graph~\citep{RDL-fey} fixes rows as nodes and foreign keys as edges, and the RelBench
line~\citep{relbench-v1, relbench-v2} standardises tasks over it, with architectures exploiting its specific structure~\citep{chen2025relgnn,dwivedi2026relgt}.
Structure can also be induced implicitly through cross-row attention in tabular
foundation models~\citep{tabpfn_nature,qu2025tabicl}; there the induced structure
depends on tokenisation and architectural detail and is not available as an object one
can inspect, compare, or hold fixed, which is what the grable abstraction~\citep{grables} isolates. In the reverse direction, benchmarks with tabular features on given graphs~\citep{bazhenov2025GraphLand} probe whether structure is used at all; our negative control on TabArena \cite{tabarena} plays the analogous role from the tabular side.

\subsection{Which structure is worth having}
Three recent results converge on the same finding from different directions.

\citet{wang2024dbinfer} make the relational-to-graph extraction strategy an explicit benchmark axis (Row2Node, Row2N/E, dummy tables) and report that the choice materially changes performance. \citet{rdb2g} enumerate schema-level graph models over
$5$ databases and $12$ tasks, producing roughly $50$k graph--performance pairs, and find the best model improves on the common heuristic by up to $10\%$ while remaining hard to identify without repeated GNN training; \citet{chen2026relatron} find that relational deep learning does not consistently beat deep feature synthesis, that no single architecture dominates, and that validation performance is an unreliable selector, and respond with training-free task signals for routing. Their object of selection is the architecture given a graph; ours is the graph given an architecture (or a family of them).

\subsection{Automatic table to graph constructors}
Table~\ref{tab:invariants} compares the methods that produce a graph automatically. We separate them by what they return, what they require as input, and what a single candidate evaluation costs. 

\begin{table}[h]
\centering
\small
\caption{Automatic table-to-graph constructors by invariant. \emph{Candidate cost} is the work required to score one candidate construction during search.}
\label{tab:invariants}
\begin{tabular}{llll}
\toprule
Method & Output class & Input regime & Candidate cost  \\
\midrule
REG~\citep{RDL-fey}          & FK entity graph & FK schema           & ---                  \\
$\gamma_{\mathrm{inc}}$~\citep{grables} & incidence grable & table / schema & ---         \\
auGraph~\citep{augraph}      & augmented base graph & table / schema & GNN prediction  \\
AutoG~\citep{autog}  & graph from schema edits & schema + col. names & GNN basket (oracle) \\
%RelAgent~\citep{huang2026relagent} & SQL program + tab model & relational DB & predictor fit & n/a \\
\textsc{AutoGrable}          & incidence grable & table / schema & group-by + val.  \\
\bottomrule
\end{tabular}
\end{table}

AutoG's oracle uses early-stage validation performance averaged over a basket of GNNs,
which is cheaper than full training per candidate but is still a trained graph model in the loop. 

\subsection{Relation to feature selection and to explanation}
Selecting columns whose induced partition refines the label partition, viewed as a covering problem
over pairs with different labels, is Test Cover, which is Theorem~\ref{thm:separate-hard}. However,
two things separate our objective from standard filter feature selection.
\begin{enumerate}
    \item In our setting, the selected columns are not features but structure: they become value
    nodes, and what they contribute is cross-row communication rather than row-local signal.
    \item The partition scored by $\mathcal J$ is not a convenient proxy:
    it is exactly the structural row partition the selected construction induces, and hence the
    component of the downstream learner's expressive ceiling that the construction controls. The
    downstream GNN also receives the unexpanded row features, so it can separate rows within a cell;
    $\mathcal J$ scores what the construction contributes, not the full model.
\end{enumerate}
Selection cannot substitute construction---on extension-sensitive targets no row-local predictor over the selected columns can compute the label at any capacity~\citep{grables}.

\citet{rissaki2025views} are formally the closest work we are aware of: they select database views minimising deviation-from-determinacy plus $\lambda \cdot \mathrm{cost}$, and observe that even the
projection-only fragment reduces to hitting set. The objective has the same shape as $\mathcal J$---fidelity plus a regularised complexity term---and the same combinatorial core. The difference is directional: their views explain a model that has already been trained, ours chooses a construction before one exists.
\section{Connection to Test Cover and Min Features Problem}
\label{app:testcover}

Selecting columns so that the induced row partition separates the labels is the
\textsc{Min-Features} (or $k$-\textsc{Feature-Set}) problem, shown NP-complete by
\citet{davies1994np}. We record here an alternative derivation as an instance of the
classical \textsc{Test Cover} problem. We restate the result on the incidence constructions graph side. 

\subsection{Setup}
A table $T$ has a finite set of rows, a label column $Y$, and feature columns
$A$. Entries are real numbers; categorical columns are assumed one-hot encoded, so this is without loss of generality. For $S\subseteq A$ and a row $r$, write $r|_S$ for the restriction of $r$ to $S$, and let $\pi_S$ be the partition of
rows defined by $r\sim r' \iff r|_S=r'|_S$. Let $\pi_Y$ be the partition induced by the label column. A column $c$ \emph{separates} a pair $\{r,r'\}$ if $r[c]\neq r'[c]$, and a set $S$ separates the pair if some $c\in S$ does, i.e.\ $r|_S\neq r'|_S$.

\begin{definition}[Label-separating set]\label{def:sep}
$S\subseteq A$ is \emph{label-separating} if $\pi_S$ refines $\pi_Y$: equivalently, every pair of rows with distinct labels is separated by some
$c\in S$.
\end{definition}

%A $1$-WL--bounded predictor over the columns $S$ is constant on the blocks of $\pi_S$ (\cref{lem:signature}), so it can fit the labels only when $\pi_S$ refines $\pi_Y$. Label-separation is thus the exact condition under which the constructed graph admits a correct message-passing predictor: aligning the row partition with the labels. This motivates the selection problem.

\begin{definition}[\textsc{WL-Separate}]\label{def:wlalign}
\emph{Given} a table $T$ with label column $Y$, feature columns $A$, and
$k\in\mathbb{N}$; \emph{decide} whether there exists a label-separating $S\subseteq A$ with $|S|\le k$. The optimization version, \textsc{Min-Separate}, asks for a label-separating $S$ of minimum size.
\end{definition}

\subsection{The test cover problem}
\begin{definition}[Test cover]
\label{def:testcover}
Given a set of items $I$ and a collection of tests $\mathcal{T}\subseteq 2^{I}$, where a test $t$ \emph{separates} $\{i,i'\}$ iff $|t\cap\{i,i'\}|=1$; \emph{find} a smallest $\mathcal{T}'\subseteq\mathcal{T}$
such that every pair of items is separated by some $t\in\mathcal{T}'$.
\end{definition}

Test cover is set cover over the universe of item pairs $\binom{I}{2}$: a test covers exactly the pairs it separates. It is NP-complete
\cite{garey1979computers}. 

\subsection{Reduction and hardness}
\begin{lemma}[\textsc{Test Cover} $\le_p$ \textsc{WL-Separate}]\label{lem:reduction}
Given an instance $(I,\mathcal{T})$ with $I=\{1,\dots,m\}$ and
$\mathcal{T}=\{t_1,\dots,t_p\}$, construct a table $T$ with
\begin{itemize}
  \item one row $r_i$ per item $i\in I$;
  \item one feature column $c_j$ per test $t_j$, with
        $r_i[c_j]=1$ if $i\in t_j$ and $r_i[c_j]=0$ otherwise;
  \item a label column with a distinct label per row, $r_i[Y]=i$.
\end{itemize}
Then for every $S\subseteq\{c_1,\dots,c_p\}$, $S$ is label-separating iff $\{t_j : c_j\in S\}$ is a test cover of $(I,\mathcal{T})$. 
\end{lemma}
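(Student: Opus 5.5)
The proof is a direct unfolding of definitions, done as a chain of equivalences for a fixed but arbitrary $S\subseteq\{c_1,\dots,c_p\}$. Write $\mathcal{T}_S:=\{t_j : c_j\in S\}$.

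\textbf{Step 1.} Since the label column assigns a distinct label $r_i[Y]=i$ to every row, two rows $r_i,r_{i'}$ have distinct labels if and only if $i\neq i'$. So $\pi_Y$ is the partition of the rows into singletons. By Definition~\ref{def:sep}, $S$ is label-separating if and only if every pair of distinct rows $\{r_i,r_{i'}\}$ is separated by some $c_j\in S$, i.e.\ $r_i[c_j]\neq r_{i'}[c_j]$.

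\textbf{Step 2.} The construction gives, for each column and each pair of items, $r_i[c_j]\neq r_{i'}[c_j]$ if and only if exactly one of $i,i'$ belongs to $t_j$. This holds because the entries are the indicators $\mathbf 1\{i\in t_j\}$ and $\mathbf 1\{i'\in t_j\}$, and these differ exactly when $|t_j\cap\{i,i'\}|=1$. By Definition~\ref{def:testcover}, that is precisely the condition that $t_j$ separates $\{i,i'\}$.

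\textbf{Step 3.} Quantify over all pairs. $S$ is label-separating if and only if for every pair $\{i,i'\}\in\binom{I}{2}$ some $c_j\in S$ has $t_j$ separating $\{i,i'\}$. This says exactly that $\mathcal{T}_S$ is a test cover of $(I,\mathcal{T})$.

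\textbf{Step 4.} The construction is clearly polynomial: it produces an $m\times(p+1)$ table whose entries are read off directly from membership in the $t_j$. The map $c_j\mapsto t_j$ is a bijection, so $|\mathcal{T}_S|=|S|$ and the size bound $k$ transfers unchanged. This yields the claimed reduction.

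The main obstacle is a bookkeeping point in Step 4 rather than a conceptual one. If two tests $t_j=t_{j'}$ coincide as sets, $\mathcal{T}$ should be treated as an indexed family so that the bijection and the cardinality statement remain correct. Alternatively, duplicate tests can be removed beforehand, which changes neither instance's answer.
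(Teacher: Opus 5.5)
Your proof is correct and matches the paper's argument: distinct labels force every pair of rows to be separated, and the indicator entries on $c_j$ differ exactly when $t_j$ separates $\{i,i'\}$. Your Step 4 (polynomiality, $|\mathcal{T}_S|=|S|$, and the care about coinciding tests) goes beyond the lemma itself; it corresponds to, and slightly sharpens, the ``size-preserving'' remark the paper makes in the proof of Theorem~\ref{thm:hardness}.
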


\begin{proof}
Since labels are distinct, every pair of rows must be separated, so
label-separation means every pair $\{r_i,r_{i'}\}$ is separated by some $c_j\in S$. By construction $c_j$ separates $\{r_i,r_{i'}\}$ iff $r_i[c_j]\neq
r_{i'}[c_j]$ iff exactly one of $i,i'$ lies in $t_j$ iff $t_j$ separates $\{i,i'\}$. Hence $S$ separates all pairs iff $\{t_j:c_j\in S\}$ covers all pairs.
\end{proof}

\begin{theorem}\label{thm:hardness}
\textsc{WL-Separate} is NP-complete. 
\end{theorem}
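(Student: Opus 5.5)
The proof has two parts: membership in NP and NP-hardness.

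\emph{Membership in NP.} A certificate is a subset $S\subseteq A$ with $|S|\le k$, which has size polynomial in the input. To verify it, we check that $|S|\le k$ and that $\pi_S$ refines $\pi_Y$. For the refinement check, take every pair of rows $r,r'$ with $r[Y]\neq r'[Y]$ and confirm that $r|_S\neq r'|_S$. This costs $O(|T|^2|S|)$ comparisons. Equivalently, one can group the rows by $r|_S$ and check that each group carries a single label, which is the same group-by used to compute $\pi_S$ in the main text.

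\emph{Hardness.} We invoke Lemma~\ref{lem:reduction}, which maps a \textsc{Test Cover} instance $(I,\mathcal{T})$ to a table $T$ with $m$ rows and $p$ feature columns. We map the decision threshold unchanged: the \textsc{Test Cover} question ``is there a cover of size at most $k$'' becomes the \textsc{WL-Separate} instance $(T,Y,\{c_1,\dots,c_p\},k)$.

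The map $S\mapsto\{t_j: c_j\in S\}$ is a bijection between column subsets and test subsets, and it preserves cardinality. By Lemma~\ref{lem:reduction}, label-separating sets correspond exactly to test covers. So a test cover of size at most $k$ exists if and only if a label-separating set of size at most $k$ exists.

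The table has $m(p+1)$ entries, each computable in constant time from the membership relation $i\in t_j$. The reduction is therefore polynomial. Since \textsc{Test Cover} is NP-complete~\citep{garey1979computers}, \textsc{WL-Separate} is NP-hard.

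Two small points remain for the statement as phrased in Theorem~\ref{thm:separate-hard}.

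First, that statement is given in terms of $T_{\mathrm{tr}}$ and an eligible set $F$. We set $T_{\mathrm{tr}}:=T$ and $F:=A$. The validation split plays no role, since it can be taken to be empty or any arbitrary set.

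Second, the lemma uses a label column with $m$ distinct values, whereas the scoring section is phrased for binary classification. Theorem~\ref{thm:separate-hard} does not restrict the label alphabet, so multi-valued labels are allowed and no further work is needed. If a binary-label version were wanted, this would be the only real obstacle. My first attempt would be to reduce instead from the \textsc{Min-Features} hardness of \citet{davies1994np}, which already holds for Boolean labels.
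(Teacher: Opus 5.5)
Your proof is correct and matches the paper's argument: NP membership by checking that every pair of differently labelled rows is separated, and hardness via the size-preserving \textsc{Test Cover} reduction of Lemma~\ref{lem:reduction} with the threshold $k$ passed through unchanged. Your additional remarks only make explicit details the paper leaves implicit: the polynomial size of the constructed table, the instantiation $T_{\mathrm{tr}}:=T$, $F:=A$, and the fact that the label alphabet is unrestricted.
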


\begin{proof}
Membership: given $S$, checking that every pair of differently-labelled rows is separated takes polynomial time. Hardness follows from Lemma \ref{lem:reduction}, which is size-preserving and therefore transfers both the decision hardness and the optimal value of test cover. 
\end{proof}

\newcommand{\proofcheck}[1]{}
\section{Details of the training-free alignment score}
\label{sec:score-details}

\paragraph{Roadmap.}
The purpose of the alignment score is to select a subset of columns that exposes
label-relevant structure without fragmenting the data into cells that are too small
to support reliable estimation or useful message passing. The construction has four
steps. Given a column subset $S$, we
\begin{enumerate}
    \item partition the training rows according to their values on $S$;
    \item attach to each resulting cell the simplest possible predictor, namely its
          empirical label distribution;
    \item measure how strongly the partition fragments the training sample through an
          occupancy penalty; and
    \item combine this penalty with the predictor's loss on a held-out validation set.
\end{enumerate}
Thus, adding columns creates a more expressive predictor, but also produces finer
and potentially less well-supported cells. The score makes this trade-off explicit.
We first define the cells, predictor, and occupancy penalty. We then establish the
statistical meaning of the penalty and the validation guarantee for the resulting
score. Finally, we describe the greedy search used to optimise it in practice.

\subsection{From column subsets to cells}
Let
$\mathcal X=\mathrm{val}^{A}$ be the ambient row space over the non-label attributes
$A=C\setminus\{Y\}$, and let $\mathcal Y$ be the finite label alphabet.
We treat the data as an indexed sample
\[
    T=\bigl((X_i,Y_i)\bigr)_{i=1}^{n}.
\]
Fix a split of the indices into disjoint training and validation sets
$I_{\mathrm{tr}}$ and $I_{\mathrm{val}}$, with
\[
    n_{\mathrm{tr}}=|I_{\mathrm{tr}}|\geq 1,
    \qquad
    n_{\mathrm{val}}=|I_{\mathrm{val}}|\geq 1.
\]
The corresponding subsamples are denoted by
$T_{\mathrm{tr}}$ and $T_{\mathrm{val}}$.

Let $F\subseteq A$ be the fixed set of columns eligible for selection.
Throughout this section, $S\subseteq F$ denotes a candidate subset.

The information exposed by $S$ about a row $x\in\mathcal X$ is its
projection $x|_S$. Rows that agree on this projection belong to the same
\emph{cell}. More precisely, for every value tuple $u\in\mathrm{val}^{S}$,
define
\[
    B_{S,u}
    =
    \bigl\{
        i\in I_{\mathrm{tr}} : X_i|_S=u
    \bigr\},
    \qquad
    N_{S,u}=|B_{S,u}|.
\]
We index cells by value tuples rather than only by observed training rows.
Consequently, the count $N_{S,u}$ is defined even when no training row has
projection $u$. This will allow the predictor introduced below to be defined
on all of $\mathcal X$, including previously unseen rows.

The nonempty sets $B_{S,u}$ form a partition
$\pi_S\in\Pi(I_{\mathrm{tr}})$ of the training indices:
\[
    i\sim_{\pi_S}j
    \quad\Longleftrightarrow\quad
    X_i|_S=X_j|_S.
\]
We write $\pi'\preceq\pi$ when $\pi'$ refines $\pi$, meaning that every
block of $\pi'$ is contained in a block of $\pi$.

\subsection{The block predictor}

A column subset $S$ induces a predictor without fitting a parametric model.
On each occupied cell, the predictor simply returns the empirical label
distribution of the training rows in that cell. For $N_{S,u}>0$, define
\[
    \hat p_S(y\mid u)
    =
    \frac{1}{N_{S,u}}
    \sum_{i\in I_{\mathrm{tr}}}
    \mathbf 1\{X_i|_S=u,\ Y_i=y\}.
\]
For a cell that does not occur in the training data, these conditional
frequencies are not identifiable. We therefore use the training-set marginal
\[
    \hat p_0(y)
    =
    \frac{1}{n_{\mathrm{tr}}}
    \sum_{i\in I_{\mathrm{tr}}}
    \mathbf 1\{Y_i=y\}
\]
as a fixed fallback.

The resulting \emph{block predictor} is the function
\[
    \hat h_S(x)
    =
    \begin{cases}
        \hat p_S(\,\cdot\mid x|_S),
            & N_{S,x|_S}>0,\\[2pt]
        \hat p_0,
            & N_{S,x|_S}=0.
    \end{cases}
\]
The predictor is therefore defined on the entire row space $\mathcal X$, not
only on the rows observed during training. Moreover, conditional on
$T_{\mathrm{tr}}$, it is a fixed function: all its cell counts, label
frequencies, and fallback probabilities are determined by the training sample.

On occupied cells, $\hat p_S(\cdot\mid u)$ is the categorical
maximum-likelihood estimate. When a hard prediction is required, we use
\[
    \hat g_S(x)
    =
    \arg\max_{y\in\mathcal Y}\hat h_S(x)(y),
\]
with a fixed tie-breaking rule. Thus, $\hat h_S$ estimates a label
distribution, whereas $\hat g_S$ is the $0$--$1$ empirical risk minimiser
among classifiers that are constant on the cells induced by $S$.

\subsection{The occupancy penalty}

The block predictor becomes more expressive as cells are refined. In
particular, a sufficiently fine partition can place every training row in its
own cell, at which point the predictor merely memorises the training labels.

What matters, however, is not the number of cells that could exist, but how the observed
training rows are distributed over the cells that actually occur. We quantify
this through the occupancy functional
\[
    \Omega(T_{\mathrm{tr}},\pi)
    =
    \frac{1}{n_{\mathrm{tr}}}
    \sum_{B\in\pi}\sqrt{|B|}.
\]
For the partition induced by $S$, this becomes
\begin{equation}
    \Omega(T_{\mathrm{tr}},\pi_S)
    =
    \frac{1}{n_{\mathrm{tr}}}
    \sum_{u:N_{S,u}>0}\sqrt{N_{S,u}}
    =
    \frac{1}{n_{\mathrm{tr}}}
    \sum_{i\in I_{\mathrm{tr}}}
    \frac{1}{\sqrt{N_{S,X_i|_S}}}.
    \label{eq:occupancy}
\end{equation}
The final expression gives a useful row-wise interpretation: each training row
is charged the inverse square root of the number of rows supporting its
prediction.

Accordingly, $\Omega$ is best read as an \emph{occupancy penalty}. It is small
when many rows share a few well-populated cells, and large when the data is
spread over many sparsely populated cells. Since the block sizes always sum to
$n_{\mathrm{tr}}$, we have
\[
    \frac{1}{\sqrt{n_{\mathrm{tr}}}}
    \leq
    \Omega(T_{\mathrm{tr}},\pi_S)
    \leq
    1.
\]
The two extremes illustrate the scale:
\begin{itemize}
    \item If all rows belong to one cell, then
          $\Omega=1/\sqrt{n_{\mathrm{tr}}}$. Every prediction is supported by
          the entire training sample.
    \item If every cell is a singleton, then $\Omega=1$. This is the pure
          memorisation regime.
\end{itemize}
%
% More generally, if the training rows are distributed equally over $K$ cells,
% then
% \[
%     \Omega=\sqrt{\frac{K}{n_{\mathrm{tr}}}}.
% \]
% This is the familiar square-root estimation scale for a problem with $K$
% distinct row types observed with repetition.

\subsubsection{Adding columns increases fragmentation}

The first important property of $\Omega$ is structural: exposing more columns
can only refine the partition and therefore can only increase the penalty.

\begin{lemma}
\label{lem:col-mono}
If $S\subseteq S'\subseteq F$, then
\[
    \pi_{S'}\preceq\pi_S.
\]
\end{lemma}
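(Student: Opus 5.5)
The plan is to argue directly from the definition of $\pi_S$ as the partition of $I_{\mathrm{tr}}$ induced by the equivalence $i\sim_{\pi_S}j \iff X_i|_S=X_j|_S$, together with the definition of refinement: every block of $\pi_{S'}$ must be contained in a block of $\pi_S$. Since both are partitions of the same index set $I_{\mathrm{tr}}$, it suffices to show that the equivalence relation defining $\pi_{S'}$ is contained in the one defining $\pi_S$. That is, we show that $i\sim_{\pi_{S'}}j$ implies $i\sim_{\pi_S}j$.

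The key step is that restriction composes: for $S\subseteq S'$ and any row $x$, we have $x|_S=(x|_{S'})|_S$. Hence, if $X_i|_{S'}=X_j|_{S'}$, then applying the restriction to $S$ on both sides gives $X_i|_S=X_j|_S$. Concretely, for every $c\in S\subseteq S'$ we have $X_i[c]=X_j[c]$.

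It remains to translate this into blocks. Take a nonempty block $B_{S',u'}$ of $\pi_{S'}$ and set $u:=u'|_S$. Then every $i\in B_{S',u'}$ satisfies $X_i|_S=u$, so $B_{S',u'}\subseteq B_{S,u}$. Moreover, $B_{S,u}$ is nonempty because $B_{S',u'}$ is, so it is indeed a block of $\pi_S$. This gives $\pi_{S'}\preceq\pi_S$.

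There is no real obstacle here. The only care needed is the bookkeeping that blocks are indexed by value tuples, so that the nonemptiness of $B_{S,u}$ is justified, and noting the edge case $S=\emptyset$. In that case $\pi_\emptyset$ is the single block $I_{\mathrm{tr}}$, since the empty projection is the same for every row, and the claim is trivial.
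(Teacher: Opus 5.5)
Your proof is correct and follows the paper's own argument: agreement on $S'$ implies agreement on the subset $S$, so every block of $\pi_{S'}$ is contained in a block of $\pi_S$. The bookkeeping with $u:=u'|_S$ and the nonemptiness of $B_{S,u}$ spells out a step the paper leaves implicit.
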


\begin{proof}
If $X_i|_{S'}=X_j|_{S'}$, then $X_i|_S=X_j|_S$ because
$S\subseteq S'$. Hence every block of $\pi_{S'}$ is contained in a block of
$\pi_S$.
\proofcheck{Agreement on the larger set $S'$ includes agreement on every
column in the smaller set $S$. Thus adding columns may split an old block,
but it cannot merge rows from different old blocks.}
\end{proof}

\begin{lemma}
\label{lem:omega-mono}
If $\pi'\preceq\pi$, then
\[
    \Omega(T_{\mathrm{tr}},\pi)
    \leq
    \Omega(T_{\mathrm{tr}},\pi').
\]
\end{lemma}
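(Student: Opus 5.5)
The plan is to reduce the statement to the subadditivity of the square root, applied one block at a time. Since $\pi'\preceq\pi$, every block $B\in\pi$ is the disjoint union of those blocks $B'\in\pi'$ with $B'\subseteq B$. Refinement guarantees that each block of $\pi'$ lies in some block of $\pi$, and because $\pi$ is a partition, that block is unique. The first step is therefore to regroup the sum defining $\Omega(T_{\mathrm{tr}},\pi')$ according to the block of $\pi$ that contains each $B'$:
\[
    \Omega(T_{\mathrm{tr}},\pi')
    =
    \frac{1}{n_{\mathrm{tr}}}\sum_{B\in\pi}\ \sum_{B'\in\pi',\,B'\subseteq B}\sqrt{|B'|}.
\]

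The second step compares the inner sum with $\sqrt{|B|}$, block by block. For nonnegative reals $a_1,\dots,a_m$ we have
\[
    \sqrt{a_1+\dots+a_m}\le\sqrt{a_1}+\dots+\sqrt{a_m}.
\]
This follows by squaring the right-hand side: it equals $\sum_j a_j$ plus nonnegative cross terms $2\sqrt{a_ja_k}$. Take the $a_j$ to be the sizes $|B'|$ of the blocks of $\pi'$ inside $B$. These sizes sum to $|B|$, so
\[
    \sqrt{|B|}\le\sum_{B'\subseteq B}\sqrt{|B'|}.
\]

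The final step sums this inequality over $B\in\pi$ and divides by $n_{\mathrm{tr}}$, which gives $\Omega(T_{\mathrm{tr}},\pi)\le\Omega(T_{\mathrm{tr}},\pi')$.

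No step is a genuine obstacle. The only point needing care is the bookkeeping in the regrouping: each $B'$ must be counted exactly once, in its unique parent block $B$. This holds because both $\pi$ and $\pi'$ are partitions of the same index set $I_{\mathrm{tr}}$ and all blocks are nonempty. Combined with Lemma~\ref{lem:col-mono}, the result shows that $\Omega(T_{\mathrm{tr}},\pi_S)$ is monotone nondecreasing in $S$.
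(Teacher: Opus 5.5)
Your proof is correct and follows the paper's argument: decompose each block $B\in\pi$ into the blocks of $\pi'$ it contains, apply subadditivity of the square root blockwise, then sum over $B\in\pi$ and divide by $n_{\mathrm{tr}}$. Your squaring justification of subadditivity and your regrouping bookkeeping only spell out details the paper leaves implicit.
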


\begin{proof}
Every block $B\in\pi$ is the disjoint union of the blocks
$B'\in\pi'$ that it contains. Therefore,
\[
    |B|
    =
    \sum_{\substack{B'\in\pi'\\B'\subseteq B}}|B'|.
\]
By subadditivity of the square root,
\[
    \sqrt{|B|}
    \leq
    \sum_{\substack{B'\in\pi'\\B'\subseteq B}}\sqrt{|B'|}.
\]
\proofcheck{For example, splitting a block of size $a+b$ replaces
$\sqrt{a+b}$ by $\sqrt a+\sqrt b$, which is never smaller. The displayed
inequality applies this observation to every piece of every block.}
Summing over $B\in\pi$ and dividing by $n_{\mathrm{tr}}$ proves the
claim.
\end{proof}

\begin{corollary}
\label{cor:omega-cols}
If $S\subseteq S'\subseteq F$, then
\[
    \Omega(T_{\mathrm{tr}},\pi_S)
    \leq
    \Omega(T_{\mathrm{tr}},\pi_{S'}).
\]
\end{corollary}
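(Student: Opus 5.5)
The corollary follows by composing the two preceding lemmas, with no new computation.

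First, assume $S\subseteq S'\subseteq F$. Lemma~\ref{lem:col-mono} then gives $\pi_{S'}\preceq\pi_S$: agreement on every column of $S'$ implies agreement on every column of $S$, so each block of $\pi_{S'}$ lies inside a block of $\pi_S$.

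Second, apply Lemma~\ref{lem:omega-mono} with $\pi:=\pi_S$ and $\pi':=\pi_{S'}$. This yields $\Omega(T_{\mathrm{tr}},\pi_S)\le\Omega(T_{\mathrm{tr}},\pi_{S'})$, which is exactly the claim.

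Both partitions are partitions of the same index set $I_{\mathrm{tr}}$ and are normalised by the same $n_{\mathrm{tr}}$, so the hypotheses of Lemma~\ref{lem:omega-mono} hold verbatim.

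The only point needing care is that Lemma~\ref{lem:omega-mono} sums over the \emph{nonempty} blocks. This matches the definition of $\Omega(T_{\mathrm{tr}},\pi_S)$ as a sum over $u$ with $N_{S,u}>0$. Empty value tuples contribute nothing on either side, so the identification is immediate.

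There is no real obstacle here; the corollary is a two-line chain of the two lemmas.
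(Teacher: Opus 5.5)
Your proof is correct and matches the paper's route: the paper gives no separate proof of this corollary, stating it directly after Lemma~\ref{lem:col-mono} and Lemma~\ref{lem:omega-mono}. It follows by chaining those two lemmas with $\pi=\pi_S$ and $\pi'=\pi_{S'}$, exactly as you do.
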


Thus, an additional column may improve predictive fidelity, but it can never
reduce the occupancy penalty. This is precisely the tension that the alignment
score will balance.

\subsubsection{Why occupancy is statistically meaningful}

The occupancy penalty is not merely a heuristic measure of fragmentation. It
also controls the estimation error of the corresponding class of cell-constant
classifiers.

Let
\[
    \mathcal G_S
    =
    \left\{
        g:\mathcal X\to\mathcal Y
        \;\middle|\;
        x|_S=x'|_S
        \Rightarrow
        g(x)=g(x')
    \right\}
\]
be the class of classifiers that are constant on the cells induced by $S$.
Let $\hat g_S$ be an empirical risk minimiser over $\mathcal G_S$ on
$T_{\mathrm{tr}}$ under $0$--$1$ loss. On every occupied cell, this is the
majority-label classifier. The convention above fixes its value on unoccupied
cells.

Write
\[
    \mathrm{Risk}(g)
    =
    \Pr_{\mathcal D}[g(X)\neq Y],
    \qquad
    \mathrm{Risk}^*
    =
    \inf_g\mathrm{Risk}(g),
\]
and
\[
    \mathrm{Risk}^*_S
    =
    \inf_{g\in\mathcal G_S}\mathrm{Risk}(g).
\]

\begin{lemma}
\label{lem:occ-bound}
Fix $S\subseteq F$ independently of $T_{\mathrm{tr}}$. Suppose that
$\mathcal Y=\{0,1\}$ and that $T_{\mathrm{tr}}$ consists of
$n_{\mathrm{tr}}$ independent draws from $\mathcal D$. For every
$\delta\in(0,1)$, with probability at least $1-\delta$ over
$T_{\mathrm{tr}}$,
\begin{equation}
    \mathrm{Risk}(\hat g_S)-\mathrm{Risk}^*
    \leq
    \underbrace{
        \bigl(\mathrm{Risk}^*_S-\mathrm{Risk}^*\bigr)
    }_{\text{approximation error}}
    +
    \underbrace{
        \Omega(T_{\mathrm{tr}},\pi_S)
    }_{\text{estimation error}}
    +
    4\sqrt{
        \frac{\ln(4/\delta)}{2n_{\mathrm{tr}}}
    }.
    \tag{$\ast$}
    \label{eq:occ-recall}
\end{equation}
\end{lemma}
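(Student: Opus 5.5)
The plan is to split the excess risk at the cell-constant optimum, $\mathrm{Risk}(\hat g_S)-\mathrm{Risk}^*=(\mathrm{Risk}^*_S-\mathrm{Risk}^*)+(\mathrm{Risk}(\hat g_S)-\mathrm{Risk}^*_S)$. The first bracket is kept as the approximation term, so everything reduces to bounding the estimation term $\mathrm{Risk}(\hat g_S)-\mathrm{Risk}^*_S$. Let $g^*_S\in\mathcal G_S$ attain $\mathrm{Risk}^*_S$. It exists because, under $0$--$1$ loss, a cell-constant classifier can take the Bayes label of each cell separately. If one prefers to avoid this, take an $\epsilon$-minimiser and let $\epsilon\to 0$ at the end. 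Writing $\widehat{\mathrm{Risk}}_{\mathrm{tr}}$ for empirical risk on $T_{\mathrm{tr}}$, I use the standard three-term decomposition
\[
\mathrm{Risk}(\hat g_S)-\mathrm{Risk}(g^*_S)\le \sup_{g\in\mathcal G_S}\bigl(\mathrm{Risk}(g)-\widehat{\mathrm{Risk}}_{\mathrm{tr}}(g)\bigr)+\bigl(\widehat{\mathrm{Risk}}_{\mathrm{tr}}(\hat g_S)-\widehat{\mathrm{Risk}}_{\mathrm{tr}}(g^*_S)\bigr)+\bigl(\widehat{\mathrm{Risk}}_{\mathrm{tr}}(g^*_S)-\mathrm{Risk}(g^*_S)\bigr).
\]
The middle term is $\le 0$ because $\hat g_S$ is an empirical risk minimiser over $\mathcal G_S$. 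Its values on unoccupied cells are irrelevant to this term.

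\textbf{Fixed-classifier term.} Since $S$, and hence $g^*_S$, is fixed independently of $T_{\mathrm{tr}}$, Hoeffding's inequality bounds the last term by $\sqrt{\ln(2/\delta)/(2n_{\mathrm{tr}})}$ with probability at least $1-\delta/2$.

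\textbf{Uniform deviation term.} For the supremum I use the data-dependent Rademacher bound: with probability at least $1-\delta/2$, it is at most $2\widehat{\mathfrak R}(\ell\circ\mathcal G_S)+3\sqrt{\ln(4/\delta)/(2n_{\mathrm{tr}})}$. Here $\widehat{\mathfrak R}$ is the empirical Rademacher complexity of the loss class on $T_{\mathrm{tr}}$. A union bound over the two events, together with $\ln(2/\delta)\le\ln(4/\delta)$, then produces exactly the constant $4\sqrt{\ln(4/\delta)/(2n_{\mathrm{tr}})}$.

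\textbf{Key step: $2\widehat{\mathfrak R}\le\Omega(T_{\mathrm{tr}},\pi_S)$.} On the sample, a function in $\ell\circ\mathcal G_S$ is determined by one label choice $c_B\in\{0,1\}$ per occupied block $B\in\pi_S$. The supremum therefore decouples across blocks:
\[
\widehat{\mathfrak R}=\frac{1}{n_{\mathrm{tr}}}\sum_{B\in\pi_S}\mathbb E_\sigma\max_{c\in\{0,1\}}\sum_{i\in B}\sigma_i\mathbf 1\{c\neq Y_i\}.
\]
Write $a,b$ for the two inner sums. Use $\max(a,b)=\tfrac{a+b}{2}+\tfrac{|a-b|}{2}$ and $\mathbb E_\sigma[a+b]=0$. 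Since $a-b=\sum_{i\in B}\sigma_i s_i$ with $|s_i|=1$, Jensen's inequality gives $\mathbb E|a-b|\le\sqrt{|B|}$. Hence each block contributes at most $\tfrac12\sqrt{|B|}$, so $\widehat{\mathfrak R}\le\tfrac12\Omega(T_{\mathrm{tr}},\pi_S)$, which yields the occupancy term with constant one.

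I expect this per-block decoupling to be the main obstacle. The point to get right is that the bound depends only on the occupied cells and their sizes, not on the possibly infinite number of value tuples. Unoccupied cells contribute nothing on the sample, so the empirical rather than the worst-case complexity is exactly the right quantity. A secondary check is the precise constant in the textbook Rademacher inequality, namely $3\sqrt{\ln(2/\delta')/2n}$ at level $\delta'=\delta/2$, so that the final constants match the statement.
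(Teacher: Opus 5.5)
Your proposal is correct and follows the paper's proof essentially step for step. It uses the same split into approximation and estimation terms, the same one-sided empirical Rademacher bound at level $\delta/2$ together with Hoeffding for a fixed comparator at level $\delta/2$, and the same per-block estimate $\mathbb E_\sigma\bigl|\sum_{i\in B}\sigma_i s_i\bigr|\le\sqrt{|B|}$ giving $2\widehat{\mathfrak R}\le\Omega(T_{\mathrm{tr}},\pi_S)$; the only difference is cosmetic: you compute the loss-class complexity directly via $\max(a,b)=\tfrac{a+b}{2}+\tfrac{|a-b|}{2}$, whereas the paper goes through the sign encoding $\mathbf 1\{g(X_i)\neq Y_i\}=\tfrac12\bigl(1-\tilde Y_i\tilde g(X_i)\bigr)$, which is the same computation.
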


\begin{proof}
Write $n=n_{\mathrm{tr}}$. Encode classifiers and labels by signs:
\[
    \tilde g=2g-1\in\{-1,+1\},
    \qquad
    \tilde Y_i=2Y_i-1.
\]
Then
\begin{equation}
    \mathbf 1\{g(X_i)\neq Y_i\}
    =
    \frac{1}{2}
    \bigl(1-\tilde Y_i\tilde g(X_i)\bigr).
    \label{eq:pm1}
\end{equation}
Let $\sigma_1,\ldots,\sigma_n$ be independent Rademacher signs and let
\[
    \widetilde{\mathcal G}_S
    =
    \{\tilde g:g\in\mathcal G_S\}.
\]

Every function in $\widetilde{\mathcal G}_S$ is constant on each block of
$\pi_S$. Its values on the training sample are therefore determined by one
freely chosen sign per block. Hence
\[
\begin{aligned}
    \widehat{\mathfrak R}
    (\widetilde{\mathcal G}_S)
    &=
    \frac{1}{n}
    \mathbb E_{\sigma}
    \left[
        \sup_{\tilde g\in\widetilde{\mathcal G}_S}
        \sum_{i=1}^{n}\sigma_i\tilde g(X_i)
    \right]                                                   \\
    &=
    \frac{1}{n}
    \sum_{B\in\pi_S}
    \mathbb E_{\sigma}
    \left|
        \sum_{i\in B}\sigma_i
    \right|.
\end{aligned}
\]
\proofcheck{Inside a fixed block $B$, the classifier must use one sign
$s_B\in\{-1,+1\}$. The best choice is the sign of
$\sum_{i\in B}\sigma_i$, so maximising over $s_B$ produces the absolute
value $\lvert\sum_{i\in B}\sigma_i\rvert$. Different blocks can choose their
signs independently, which explains the sum over blocks.}
By Cauchy--Schwarz applied to the random variables
$\lvert\sum_{i\in B}\sigma_i\rvert$ and $1$, together with
\[
    \mathbb E_{\sigma}
    \left(
        \sum_{i\in B}\sigma_i
    \right)^2
    =
    |B|,
\]
we obtain
\[
    \mathbb E_{\sigma}
    \left|
        \sum_{i\in B}\sigma_i
    \right|
    \leq
    \sqrt{|B|}.
\]
Consequently,
\[
    \widehat{\mathfrak R}
    (\widetilde{\mathcal G}_S)
    \leq
    \frac{1}{n}
    \sum_{B\in\pi_S}\sqrt{|B|}
    =
    \Omega(T_{\mathrm{tr}},\pi_S).
\]
\proofcheck{A block containing $|B|$ rows contributes at most
$\sqrt{|B|}/n$. Summing these contributions gives exactly the occupancy
penalty, so $\Omega$ is an observable upper bound on the complexity of the
cell-constant classifier class.}

Let
\[
    \mathcal L_S
    =
    \left\{
        (x,y)\mapsto\mathbf 1\{g(x)\neq y\}
        :g\in\mathcal G_S
    \right\}.
\]
By \eqref{eq:pm1}, the additive constant contributes
$\frac{1}{2n}\mathbb E_\sigma\sum_i\sigma_i=0$, while the multiplicative
factor $1/2$ remains. Moreover, the products $\sigma_i\tilde Y_i$ are again
independent Rademacher signs conditional on the sample. Therefore,
\[
    \widehat{\mathfrak R}(\mathcal L_S)
    =
    \frac{1}{2}
    \widehat{\mathfrak R}(\widetilde{\mathcal G}_S)
    \leq
    \frac{1}{2}\Omega(T_{\mathrm{tr}},\pi_S).
\]
\proofcheck{The loss is an affine transformation of the signed classifier.
The additive part averages to zero against the random signs, while the
multiplicative factor $1/2$ remains.}

Apply the standard one-sided empirical Rademacher bound for $[0,1]$-valued
loss classes with failure probability $\delta/2$. With probability at least
$1-\delta/2$, simultaneously for every $g\in\mathcal G_S$,
\[
\begin{aligned}
    \mathrm{Risk}(g)
    &\leq
    \widehat{\mathrm{Risk}}_{\mathrm{tr}}(g)
    +
    2\widehat{\mathfrak R}(\mathcal L_S)
    +
    3\sqrt{\frac{\ln(4/\delta)}{2n}}                         \\
    &\leq
    \widehat{\mathrm{Risk}}_{\mathrm{tr}}(g)
    +
    \Omega(T_{\mathrm{tr}},\pi_S)
    +
    3\sqrt{\frac{\ln(4/\delta)}{2n}}.
\end{aligned}
\]
For $\varepsilon>0$, choose a fixed $\varepsilon$-minimiser
$g^*_{S,\varepsilon}\in\mathcal G_S$ satisfying
\[
    \mathrm{Risk}(g^*_{S,\varepsilon})
    \leq
    \mathrm{Risk}^*_S+\varepsilon.
\]
Since this comparator is fixed independently of $T_{\mathrm{tr}}$, the
one-sided Hoeffding inequality, also with failure probability $\delta/2$,
gives
\[
    \widehat{\mathrm{Risk}}_{\mathrm{tr}}(g^*_{S,\varepsilon})
    \leq
    \mathrm{Risk}(g^*_{S,\varepsilon})
    +
    \sqrt{\frac{\ln(2/\delta)}{2n}}.
\]
By a union bound, both displayed inequalities hold with probability at least
$1-\delta$. Put
\[
    c
    =
    3\sqrt{\frac{\ln(4/\delta)}{2n}},
    \qquad
    b
    =
    \sqrt{\frac{\ln(2/\delta)}{2n}},
    \qquad
    \Omega
    =
    \Omega(T_{\mathrm{tr}},\pi_S).
\]
On the event above,
\[
\begin{aligned}
    \mathrm{Risk}(\hat g_S)
    &\leq
    \widehat{\mathrm{Risk}}_{\mathrm{tr}}(\hat g_S)
    +\Omega+c                                                    \\
    &\leq
    \widehat{\mathrm{Risk}}_{\mathrm{tr}}(g^*_{S,\varepsilon})
    +\Omega+c                                                    \\
    &\leq
    \mathrm{Risk}^*_S+\varepsilon+\Omega+c+b                    \\
    &\leq
    \mathrm{Risk}^*_S+\varepsilon+\Omega
    +4\sqrt{\frac{\ln(4/\delta)}{2n}}.
\end{aligned}
\]
\proofcheck{The three inequalities respectively use: the uniform
Rademacher bound for $\hat g_S$, empirical optimality of $\hat g_S$, and
Hoeffding's bound for the single fixed comparator
$g^*_{S,\varepsilon}$. Only the first step pays the occupancy term, which is
why the final bound contains one copy of $\Omega$.}
The second inequality uses the empirical optimality of $\hat g_S$.
Subtracting $\mathrm{Risk}^*$ and letting $\varepsilon\downarrow0$ gives
\eqref{eq:occ-recall}.
\end{proof}

The bound separates two effects. The approximation term measures what is lost
by forcing rows in the same cell to receive the same label. The occupancy term
measures the estimation cost of learning one prediction per occupied cell.
Thus, finer partitions can reduce approximation error, but they necessarily
increase the quantity controlling estimation error.
Because $\Omega$ can be as large as $1$, the bound may be vacuous in the
singleton-cell memorisation regime. This is precisely the regime that the
occupancy penalty is designed to detect.

The randomness in \Cref{lem:occ-bound} is over $T_{\mathrm{tr}}$: the lemma
explains why the occupancy of the training sample is worth penalising. The
validation result below instead conditions on $T_{\mathrm{tr}}$ and
randomises over $T_{\mathrm{val}}$: it certifies the selection made using
that penalty.

\subsection{The alignment score}

The occupancy penalty alone would always favour the coarsest possible
partition. We therefore balance it against predictive fidelity on the
validation split.

Let $\ell$ be a loss taking values in $[0,L]$. Define
\[
    \widehat{\mathrm{Risk}}_{\mathrm{val}}(\hat h_S)
    =
    \frac{1}{n_{\mathrm{val}}}
    \sum_{i\in I_{\mathrm{val}}}
    \ell\bigl(\hat h_S(X_i),Y_i\bigr).
\]
The training-free alignment score is
\begin{equation}
    \mathcal J(\pi_S)
    =
    \widehat{\mathrm{Risk}}_{\mathrm{val}}(\hat h_S)
    +
    \lambda\,
    \Omega(T_{\mathrm{tr}},\pi_S),
    \qquad
    \lambda\geq 0.
    \label{eq:objective2}
\end{equation}

The two terms have deliberately different roles:
\begin{itemize}
    \item $T_{\mathrm{tr}}$ determines the cells, the block predictor, and
          the occupancy penalty;
    \item $T_{\mathrm{val}}$ measures how well the resulting predictor
          generalises beyond those training cells.
\end{itemize}
The parameter $\lambda$ states how much validation loss we are willing to
trade for a coarser and better-supported structure.

\subsection{Validation guarantee}

Because every predictor $\hat h_S$ is fixed after conditioning on
$T_{\mathrm{tr}}$, standard concentration on the independent validation set
gives a uniform guarantee over all candidate subsets.

\begin{proposition}[Uniform validation guarantee]
\label{prop:val-oracle}
Fix $\delta\in(0,1)$ and condition on $T_{\mathrm{tr}}$. Assume that
\begin{enumerate}
    \item[\textup{(A1)}]
    the validation rows are independent draws from a distribution
    $\mathcal D_{\mathrm{val}}$, independent of $T_{\mathrm{tr}}$, and
    \[
        \mathrm{Risk}(h)
        =
        \mathbb E_{\mathcal D_{\mathrm{val}}}
        [\ell(h(X),Y)];
    \]
    \item[\textup{(A2)}]
    the loss $\ell$ takes values in $[0,L]$; and
    \item[\textup{(A3)}]
    the candidate set $F\subseteq A$ is fixed independently of
    $T_{\mathrm{val}}$.
\end{enumerate}
Define
\[
    \varepsilon_L
    =
    L
    \sqrt{
        \frac{|F|\ln 2+\ln(2/\delta)}
             {2n_{\mathrm{val}}}
    }.
\]
Then, with probability at least $1-\delta$ over $T_{\mathrm{val}}$, every
minimiser
\[
    S^\star
    \in
    \arg\min_{S\subseteq F}\mathcal J(\pi_S)
\]
satisfies
\begin{equation}
\begin{aligned}
    \mathrm{Risk}(\hat h_{S^\star})
    +
    \lambda\Omega(T_{\mathrm{tr}},\pi_{S^\star})
    \leq
    \min_{S\subseteq F}
    \Bigl\{
        \mathrm{Risk}(\hat h_S)
        +
        \lambda\Omega(T_{\mathrm{tr}},\pi_S)
    \Bigr\}
    +
    2\varepsilon_L.
\end{aligned}
\label{eq:validation-oracle}
\end{equation}
In particular,
\[
    \mathrm{Risk}(\hat h_{S^\star})
    \leq
    \min_{S\subseteq F}
    \Bigl\{
        \mathrm{Risk}(\hat h_S)
        +
        \lambda\Omega(T_{\mathrm{tr}},\pi_S)
    \Bigr\}
    +
    2\varepsilon_L.
\]
\end{proposition}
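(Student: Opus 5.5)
Conditional on $T_{\mathrm{tr}}$, every block predictor $\hat h_S$ and every penalty $\Omega(T_{\mathrm{tr}},\pi_S)$ is a fixed, deterministic object. The only randomness is therefore in $T_{\mathrm{val}}$, and the proof is the standard finite-class validation argument: uniform concentration over the $2^{|F|}$ candidates, followed by a two-sided comparison. By (A3), the collection $\{\hat h_S : S\subseteq F\}$ is a fixed finite family of at most $2^{|F|}$ functions, chosen before $T_{\mathrm{val}}$ is seen. The penalty term $\lambda\Omega$ is deterministic given $T_{\mathrm{tr}}$, so it cancels in every comparison and needs no concentration.

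\textbf{Step 1 (uniform concentration).} Fix $S$. By (A1) and (A2), the validation losses $\ell(\hat h_S(X_i),Y_i)$, $i\in I_{\mathrm{val}}$, are i.i.d.\ in $[0,L]$ with mean $\mathrm{Risk}(\hat h_S)$. The two-sided Hoeffding inequality gives
\[
\Pr\bigl[|\widehat{\mathrm{Risk}}_{\mathrm{val}}(\hat h_S)-\mathrm{Risk}(\hat h_S)|>t\bigr]\le 2\exp(-2n_{\mathrm{val}}t^2/L^2).
\]
Setting the right-hand side to $\delta/2^{|F|}$ and solving for $t$ yields exactly $t=\varepsilon_L$. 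A union bound over the $2^{|F|}$ subsets then shows that, with probability at least $1-\delta$, the event
\[
E:=\bigl\{|\widehat{\mathrm{Risk}}_{\mathrm{val}}(\hat h_S)-\mathrm{Risk}(\hat h_S)|\le\varepsilon_L\ \text{for all } S\subseteq F\bigr\}
\]
holds. Equivalently, on $E$ we have $|\mathcal J(\pi_S)-\bigl(\mathrm{Risk}(\hat h_S)+\lambda\Omega(T_{\mathrm{tr}},\pi_S)\bigr)|\le\varepsilon_L$ for every $S$.

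\textbf{Step 2 (oracle comparison).} On $E$, let $S^\star$ be any empirical minimiser and let $S$ be arbitrary. Then
\[
\mathrm{Risk}(\hat h_{S^\star})+\lambda\Omega_{S^\star}\le \mathcal J(\pi_{S^\star})+\varepsilon_L\le \mathcal J(\pi_S)+\varepsilon_L\le \mathrm{Risk}(\hat h_S)+\lambda\Omega_S+2\varepsilon_L.
\]
Taking the minimum over $S$ gives \eqref{eq:validation-oracle}. Because $E$ is a single event that does not depend on which minimiser is selected, the bound holds simultaneously for every minimiser. The \emph{in particular} clause follows by dropping $\lambda\Omega_{S^\star}\ge 0$ on the left-hand side, using $\lambda\ge0$ and $\Omega>0$.

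\textbf{Main subtlety.} No step is technically hard. The point to watch is justifying that conditioning on $T_{\mathrm{tr}}$ makes the candidate class fixed and independent of $T_{\mathrm{val}}$, which is what licenses applying Hoeffding to each $\hat h_S$. This relies on three facts: disjointness of the index splits, the independence stated in (A1), and the requirement in (A3) that $F$ is not chosen after looking at the validation data. The same uniformity is what lets the bound cover adaptively searched sets, at an additive cost of $\eta$.
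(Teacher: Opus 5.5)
Your proposal is correct and follows the paper's proof essentially step for step. You condition on $T_{\mathrm{tr}}$ so that each $\hat h_S$ is fixed, apply two-sided Hoeffding with a union bound over the at most $2^{|F|}$ candidates (which yields exactly $\varepsilon_L$), and then run the three-inequality chain through the empirical minimiser before dropping the nonnegative penalty.
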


\begin{proof}
Condition on $T_{\mathrm{tr}}$. For every $S\subseteq F$, the function
$\hat h_S$ is then fixed on all of $\mathcal X$: its counts, empirical
label distributions, and fallback distribution are determined entirely by the
training sample.
\proofcheck{Conditioning freezes everything learned from the training data.
The validation rows are then the only remaining source of randomness, so
ordinary concentration inequalities apply to each fixed predictor.}

Because $T_{\mathrm{val}}$ is independent of $T_{\mathrm{tr}}$, the random
variables
\[
    \ell(\hat h_S(X_i),Y_i),
    \qquad
    i\in I_{\mathrm{val}},
\]
are independent, take values in $[0,L]$, and have mean
$\mathrm{Risk}(\hat h_S)$. Hoeffding's inequality therefore gives
\[
    \Pr\left[
        \left|
            \widehat{\mathrm{Risk}}_{\mathrm{val}}(\hat h_S)
            -
            \mathrm{Risk}(\hat h_S)
        \right|
        >t
    \right]
    \leq
    2\exp\left(
        -\frac{2n_{\mathrm{val}}t^2}{L^2}
    \right).
\]

By assumption \textup{(A3)}, the family
\[
    \{\hat h_S:S\subseteq F\}
\]
is fixed before the validation set is observed and contains at most
$2^{|F|}$ predictors. A union bound with total failure probability
$\delta$ therefore yields $t=\varepsilon_L$. Hence, with probability at
least $1-\delta$, simultaneously for all $S\subseteq F$,
\begin{equation}
    \left|
        \widehat{\mathrm{Risk}}_{\mathrm{val}}(\hat h_S)
        -
        \mathrm{Risk}(\hat h_S)
    \right|
    \leq
    \varepsilon_L.
    \label{eq:two-sided}
\end{equation}
\proofcheck{There are at most $2^{|F|}$ subsets. Hoeffding controls one
subset, and the union bound makes the control simultaneous over all subsets.
The term $|F|\ln 2$ is simply $\ln(2^{|F|})$.}

On this event, for every $S\subseteq F$, the optimality of $S^\star$ for
$\mathcal J$ and two applications of \eqref{eq:two-sided} give
\[
\begin{aligned}
    \mathrm{Risk}(\hat h_{S^\star})
    +
    \lambda\Omega(T_{\mathrm{tr}},\pi_{S^\star})
    &\leq
    \widehat{\mathrm{Risk}}_{\mathrm{val}}(\hat h_{S^\star})
    +
    \lambda\Omega(T_{\mathrm{tr}},\pi_{S^\star})
    +
    \varepsilon_L                                             \\
    &\leq
    \widehat{\mathrm{Risk}}_{\mathrm{val}}(\hat h_S)
    +
    \lambda\Omega(T_{\mathrm{tr}},\pi_S)
    +
    \varepsilon_L                                             \\
    &\leq
    \mathrm{Risk}(\hat h_S)
    +
    \lambda\Omega(T_{\mathrm{tr}},\pi_S)
    +
    2\varepsilon_L.
\end{aligned}
\]
\proofcheck{The first and third lines replace population risk by validation
risk and back again, costing $\varepsilon_L$ each time. The middle line uses
only that $S^\star$ minimises the empirical penalised score. This is the
source of the final $2\varepsilon_L$.}
Minimising the right-hand side over $S\subseteq F$ proves
\eqref{eq:validation-oracle}. Dropping the nonnegative penalty on the left
gives the final statement.
\end{proof}

Assumption \textup{(A1)} holds for a uniformly random train--validation split.
For a time-based or grouped split, the guarantee should instead be read
relative to the validation distribution that is actually sampled. If rows
within a validation group are dependent, Hoeffding's inequality must be
replaced by an appropriate block-wise concentration inequality.

\subsection{How to read the guarantee}

\paragraph{The comparator is penalised risk.}
For $\lambda>0$, \Cref{prop:val-oracle} does not claim that $S^\star$ is
near-optimal for prediction risk alone. It says that $S^\star$ is
near-optimal for the trade-off declared by $\mathcal J$. A subset with
slightly larger prediction risk may therefore be preferred when it induces
substantially better-supported cells. When $\lambda=0$, the penalty
disappears and the result reduces to the classical held-out model-selection
bound.

\paragraph{Why use a positive penalty?}
There are two reasons to choose $\lambda>0$.

First, the validation guarantee conditions on one realised training sample.
A partition consisting largely of singleton cells can behave well for that
particular draw while changing sharply when the training sample changes.
\Cref{lem:occ-bound} shows that occupancy is precisely the quantity controlling
this sensitivity.

Second, $S^\star$ is not selected only to make predictions. It is used to
construct $\gamma_{\mathrm{inc}}^{S^\star}$, in which the cells determine
shared neighbourhoods. When $\Omega$ is close to $1$, most row nodes have
essentially unique neighbourhoods, leaving message passing with little shared
structure to aggregate. The score is therefore a \emph{structural selector},
not merely a risk estimator.

\paragraph{Why is occupancy measured on the training set?}
The block predictor is fitted from $T_{\mathrm{tr}}$, so its estimation error
is governed by the occupancy of the cells in $T_{\mathrm{tr}}$. Measuring
$\Omega$ on $T_{\mathrm{val}}$ would charge the score for rows that were
never used to estimate $\hat p_S(\cdot\mid u)$, while failing to penalise
training-time memorisation directly.

Using $T_{\mathrm{tr}}$ also makes both the predictor and the penalty fixed
after conditioning on the training sample. This conditional viewpoint is what
the proof of \Cref{prop:val-oracle} requires. It does not require the predictor
and penalty to be unconditionally independent; indeed, both are functions of
$T_{\mathrm{tr}}$.

\paragraph{What does the guarantee say about greedy search?}
The union bound in \Cref{prop:val-oracle} ranges over the complete lattice
$\{S:S\subseteq F\}$. Consequently, the uniform deviation event also covers
a subset chosen adaptively by inspecting validation scores.

In particular, if $\tilde S$ is the subset returned by
\Cref{alg:autograble}, then
\[
    \left|
        \widehat{\mathrm{Risk}}_{\mathrm{val}}(\hat h_{\tilde S})
        -
        \mathrm{Risk}(\hat h_{\tilde S})
    \right|
    \leq
    \varepsilon_L.
\]
The deviation for one returned candidate is $\varepsilon_L$; the factor
$2\varepsilon_L$ in \Cref{prop:val-oracle} arises only when two candidates
are compared.

More generally, suppose that $\tilde S$ is $\eta$-suboptimal for the
empirical score:
\[
    \mathcal J(\pi_{\tilde S})
    \leq
    \min_{S\subseteq F}\mathcal J(\pi_S)+\eta.
\]
Then the conclusion of \Cref{prop:val-oracle} holds with
$\eta+2\varepsilon_L$ in place of $2\varepsilon_L$. The validation
guarantee therefore survives approximate optimisation. What it does not
provide is a bound on $\eta$: although the occupancy term is monotone, the
validation-risk term is not, and greedy search may stop at a local minimum.

\subsection{Greedy optimisation}
\label{sec:algorithm}

We provide a bit more background on \Cref{alg:autograble}.
As mentioned, exact minimisation of \eqref{eq:main-alignment-score} is generally infeasible. The search
space contains $2^{|F|}$ column subsets, and the underlying separation
problem is NP-complete. We therefore use the greedy local search in

\paragraph{Search strategy.}
The algorithm maintains a current subset $S$ and considers all subsets that
differ from $S$ by one column.

Forward selection starts from $S=\emptyset$ and considers additions
$S\cup\{c\}$. Backward elimination starts from $S=F$ and considers
deletions $S\setminus\{c\}$. At every iteration, the algorithm evaluates
$\mathcal J$ on all one-column neighbours and selects the best one. The move
is accepted only when it improves the current score by more than a tolerance
$\tau$. Otherwise, the search terminates.

The algorithm returns the surviving subset $S^\star$, together with the
block model already computed for it. This block model is then passed to the
constructor as $\gamma_{\mathrm{inc}}^{S^\star}$.

\paragraph{Choosing the search direction.}
The two directions traverse the refinement lattice in opposite ways.
Forward selection starts from the coarsest partition: it has the smallest
possible occupancy penalty but is also the least expressive. Backward
elimination starts from the finest available partition: it is the most
expressive but pays the largest occupancy penalty. The validation-risk term is
not monotone, so the two directions may reach different local minima.

Forward selection is typically cheaper. It often terminates after only a few
accepted additions and never needs to construct the finest partition. It is
therefore preferable when only a small number of columns are expected to
matter.

Backward elimination is more robust to interactions between attributes. A
column that is uninformative in isolation but useful together with another
column is present at the start and is not rejected before that interaction can
be evaluated. Backward search also tends to remove high-cardinality columns
early, since such columns strongly increase $\Omega$ and often fragment the
cells on which the block predictor relies.

\paragraph{Choosing the tolerance.}
The tolerance $\tau$ prevents the search from accepting improvements that
are indistinguishable from validation noise. \Cref{prop:val-oracle} supplies
the natural scale:
\[
    \varepsilon_L
    =
    O\left(
        L\sqrt{
            \frac{|F|+\log(1/\delta)}
                 {n_{\mathrm{val}}}
        }
    \right).
\]
A single candidate's validation risk is accurate to
$\varepsilon_L$, while the uncertainty in a pairwise comparison can be as
large as $2\varepsilon_L$. Choosing $\tau$ on this order prevents the
search from pursuing sampling fluctuations. Setting $\tau=0$ recovers ordinary greedy descent.

\paragraph{Computational cost.}
For the $\mathcal J$ score, evaluation one candidate requires no
model fitting. It consists of two passes through the data:
\begin{enumerate}
    \item one hash group-by over $T_{\mathrm{tr}}$ to construct
          $\pi_S$, accumulate per-cell label counts, and compute
          $\hat p_S$, $\hat p_0$, and $\Omega$;
    \item one pass over $T_{\mathrm{val}}$ to accumulate the validation
          loss.
\end{enumerate}
Each score evaluation therefore requires
\[
    O(n_{\mathrm{tr}}+n_{\mathrm{val}})
\]
hash operations. At a subset of size $k$, backward elimination evaluates
$k$ neighbours, whereas forward selection evaluates $|F|-k$. A complete
run uses at most $O(|F|^2)$ score evaluations and hence
\[
    O\bigl(
        |F|^2(n_{\mathrm{tr}}+n_{\mathrm{val}})
    \bigr)
\]
work in the worst case. In practice, the cost is usually lower because the
search terminates as soon as no one-column move improves the score by more
than $\tau$.

\paragraph{Summary.}
The complete procedure implements a transparent structural trade-off. Adding
columns may improve validation fidelity by distinguishing more row types, but
it also creates smaller and less-supported cells. The validation loss measures
the first effect, the occupancy penalty measures the second, and
$\mathcal J$ selects the balance between them. The two guarantees justify
these roles separately: \Cref{lem:occ-bound} explains why occupancy controls
estimation, while \Cref{prop:val-oracle} shows that held-out selection is
uniformly reliable over the candidate subsets.

\section{Extended experimental results}

\subsection{Controlled tasks}

\subsubsection{Column recovery under the alignment objective (RQ1)}
\label{app:rq1}

RQ1 asks whether minimizing $\mathcal J$ selects the columns that generate the task and
rejects irrelevant ones. We answer it at two levels: whether the \emph{global minimizer} of $\mathcal J$ is the true relevant set, and whether the \emph{greedy procedure} we run reaches that minimizer. The second matters only because it shows that, despite being greedy, the algorithm performs. 

\subsection{Data-generating process}
\label{app:rq1-dgp-data-generation}
As base data we use the Census/Adult dataset~\cite{adult-dataset}, whose $8$ categorical columns form the core of the column universe $C$. A task is defined by a ground-truth relevant set $S^\circ\subseteq C$, and a label $y$ generated from $S^\circ$. We constrain the base rate to $P(y{=}1)\in[0.3,0.7]$ at generation time to avoid degenerate targets.

The five mechanism families vary in how $y$ depends on $S^\circ$, from row-local
to genuinely relational:
\begin{itemize}
  \item \textbf{Single-value.} $y$ is set by one value of one column:
    $y_i=\mathbbm{1}[T_i[c_a]{=}v]$, with $S^\circ=\{c_a\}$.
  \item \textbf{Conjunction.} $y$ requires two column values jointly:
    $y_i=\mathbbm{1}[T_i[c_a]{=}v_a \wedge T_i[c_b]{=}v_b]$, $S^\circ=\{c_a,c_b\}$, with each column individually informative but neither sufficient.
  \item \textbf{Interaction (XOR).} $y$ is the exclusive-or of two binarized
    columns, chosen so that neither column is informative about $y$ on its own
    while the pair determines it. $S^\circ=\{c_a,c_b\}$. We include this family to stress-test greedy selection.
  \item \textbf{Count.} $y$ depends on how often a row's key value recurs:
    $y_i=\mathbb{1}[\,\text{multiplicity of } T_i[k] \ge \tau\,]$,
    $S^\circ=\{k\}$. No row-local feature on $k$ expresses $y$.
  \item \textbf{Duplicate.} $y$ marks rows whose value in a particular column appears more than once: $y_i=\mathbbm{1}[\,\text{value } T_i[d] \text{ is not unique}\,]$, $S^\circ=\{d\}$.
\end{itemize}

\paragraph{Noise.} We flip each label independently with probability
$\eta\in\{0.0, 0.1, 0.2\}$ to test the robustness and performance of the model against noisy targets.

%\subsection{Objective and selection}
%\label{app:rq1-obj-def}
%A column subset $S$ under signature $\sigma$ induces, through the incidence grable, a partition $\pi_S$ of the rows: two rows share a block iff they share a $\sigma$-signature on $S$ (missing values form their own block). Since rows (nodes) in a block are 1-WL-indistinguishable, no 1-WL-bounded learner can separate them, so the best any such learner can do is predict each block's majority label. The validation risk of that block predictor,

%\[
%  R_{\mathrm{val}}(S)=\tfrac{1}{|V|}\sum_{i\in V}
%  \mathbf{1}\!\left[h_{\pi_S}(i)\ne y_i\right],
%\]
%is therefore the achievable risk ceiling on $\pi_S$, which is what makes $\mathcal J(S)=R_{\mathrm{val}}(S)+\lambda\,\Omega(S)$ a principled score rather than an arbitrary one. 

\subsection{Metrics}
Let $\hat S$ be the selected subset. \emph{Recall}
$\mathrm{Rec}=|\hat S\cap S^\circ|/|S^\circ|$ measure how much of the true set is kept; \emph{Exact}$=\mathbbm{1}[\hat S=S^\circ]$ is their all-or-nothing combination. We also separate objective failure from procedure failure: objective failure is when the minimum of $\mathcal J$ does not correspond to the partition induced by the ground-truth relevant columns, and procedure failure is when the point reached by the greedy procedure is non-optimal. 

\subsection{Protocol}
For each (family, $\eta$) we draw $N=10$ seeds, resampling the rows, planted columns, and label noise each time, and score the selected subset against the known $S^\circ$.
\subsection{Extended results}
\label{app:extended-results}

Beyond the results in Table~\ref{tab:rq1-main}, we report the full
experimental results for the different \autograble mechanisms and for
the inclusion of progressive noise in the tasks. We analyse each axis
separately.

\paragraph{Effect of the regularisation strength $\lambda$.}
Figure~\ref{fig:autograble-lambda} reports performance as a function of
$\lambda$ for every combination of the two \autograble axes---encoding
(\emph{values}, \emph{frequencies}) and direction (\emph{forward},
\emph{backward})---on the clean tasks (noise$=0$). Solid curves show the
mean over the tasks in each category (\emph{row-local},
\emph{extension-sensitive}); faint curves show the individual tasks. The
location of the optimal $\lambda$ shifts across panels and differs between
the two task categories, showing that the effect of $\lambda$ is not
independent of the chosen mechanism.

\begin{figure}[t]
  \centering
  \includegraphics[width=\linewidth]{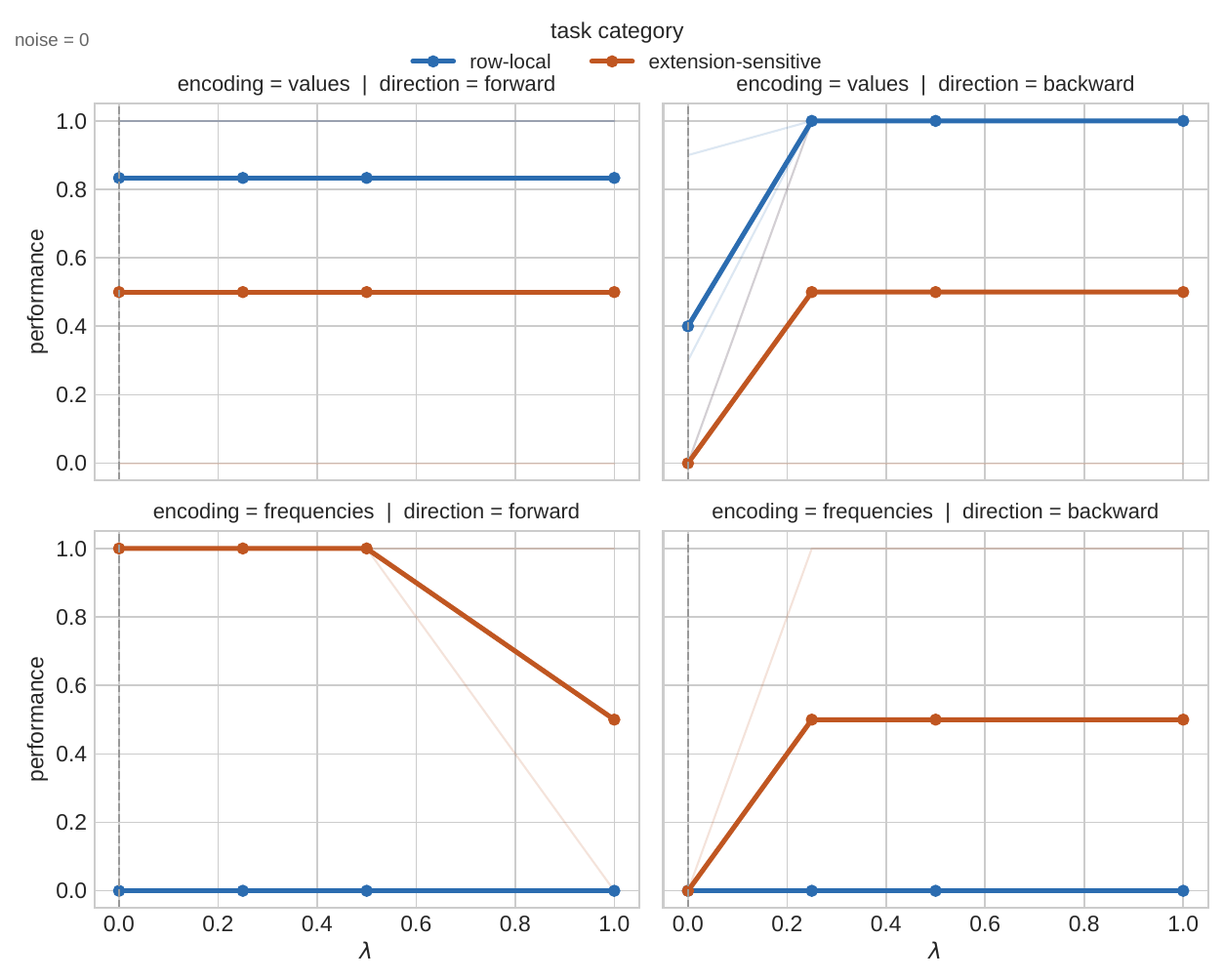}
  \caption{Performance (Exact recovery) versus regularisation strength $\lambda$ for each
    encoding$\,\times\,$direction setting, on the clean tasks
    (noise${}=0$). Solid lines are per-category means over tasks; faint
    lines are individual tasks. Axes are shared across panels. Best viewed in color. }
  \label{fig:autograble-lambda}
\end{figure}

\paragraph{Robustness to progressive noise.}
Figure~\ref{fig:autograble-noise} reports performance as the amount of
added noise increases, shown separately for each task category
(Figures~\ref{fig:autograble-noise-rowlocal}
and~\ref{fig:autograble-noise-extsens}). Within each panel---again one per
encoding$\,\times\,$direction setting---each curve corresponds to a value
of $\lambda$. This lets us read off both the overall degradation under
noise and whether the best $\lambda$ depends on the noise level: crossing
curves indicate that the ranking of $\lambda$ changes as noise grows.
% TODO: state whether the best lambda is stable or changes under noise.

\begin{figure}[t]
  \centering
  \begin{subfigure}{\linewidth}
    \centering
    \includegraphics[width=0.8\linewidth]{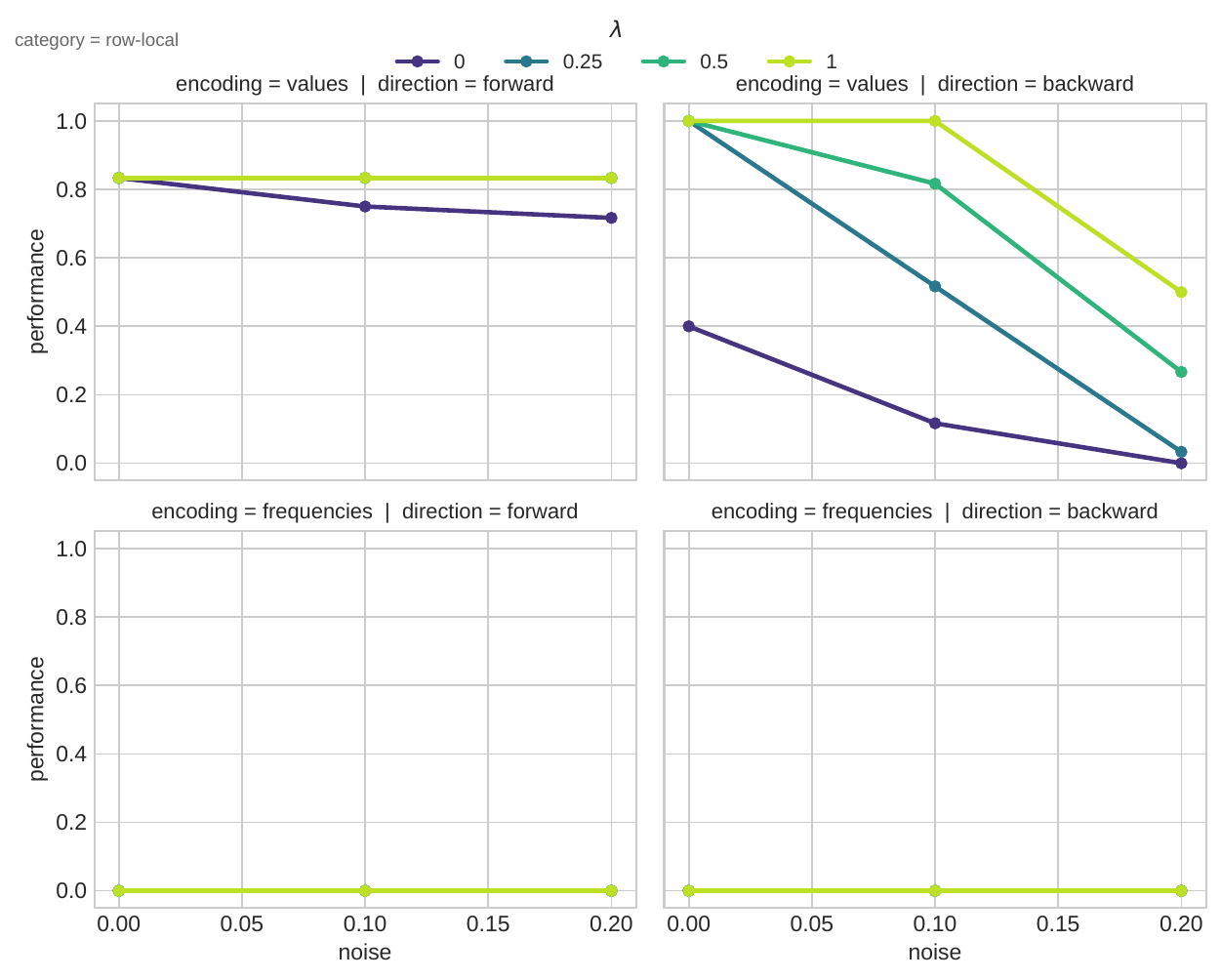}
    \caption{\emph{Row-local} tasks.}
    \label{fig:autograble-noise-rowlocal}
  \end{subfigure}

  \vspace{0.3em}

  \begin{subfigure}{\linewidth}
    \centering
    \includegraphics[width=0.8\linewidth]{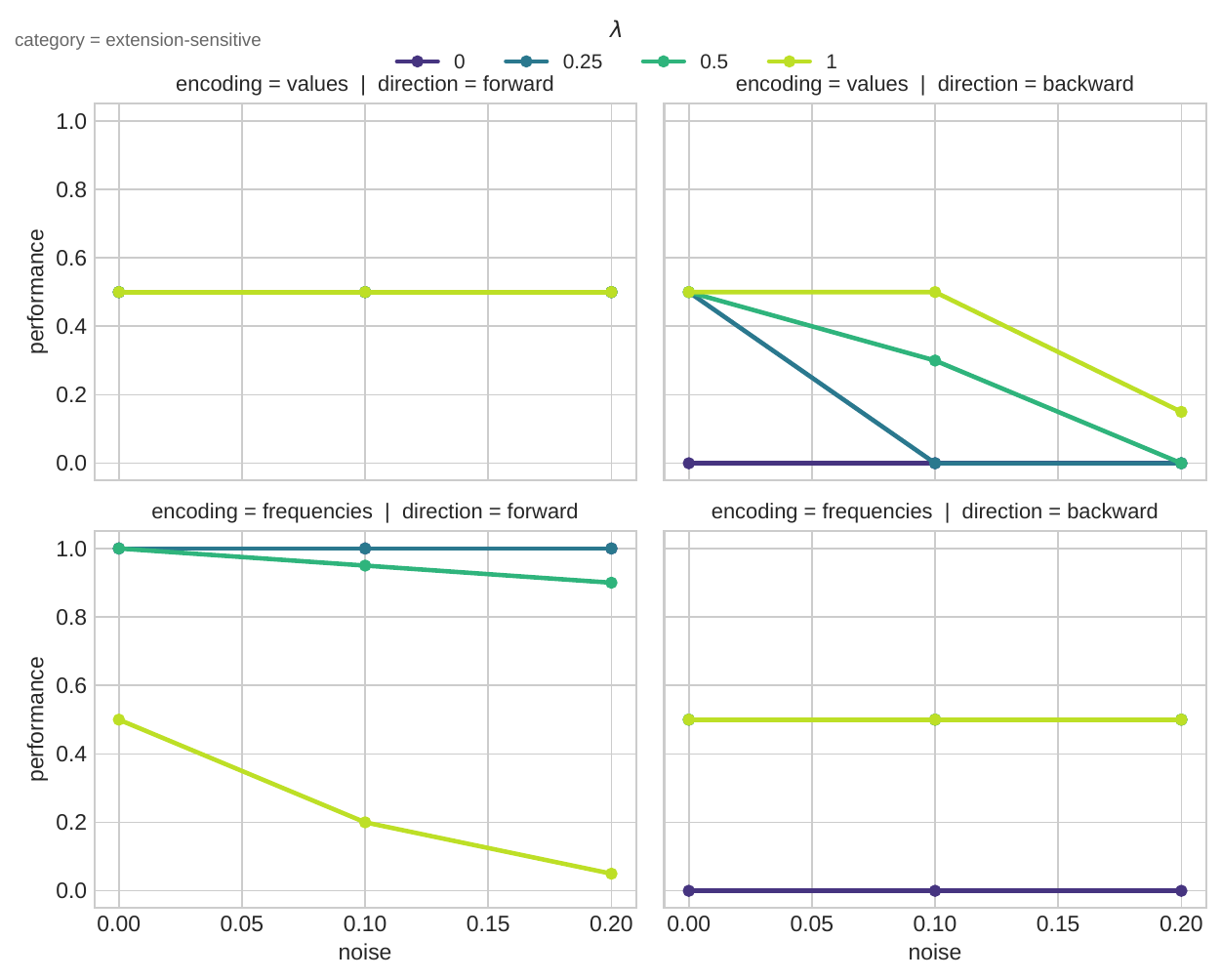}
    \caption{\emph{Extension-sensitive} tasks.}
    \label{fig:autograble-noise-extsens}
  \end{subfigure}
  \caption{Performance (Exact recovery) under progressively stronger added noise, per task
    category. Each panel corresponds to an encoding$\,\times\,$direction
    setting; each curve corresponds to a value of $\lambda$ (light to dark). Best viewed in color. }
  \label{fig:autograble-noise}
\end{figure}

\subsection{Real transactional, relational and i.i.d tasks}
\label{app:real-extended-results}

\subsubsection{Training details}
\label{appendix:details-hyperparameters-real-tasks}

\paragraph{GraphSAGE architecture and hyperparameters} Throughout the transactional, relational and i.i.d.\ tabular tasks, the predictor is kept fixed: for each dataset, a single configuration is instantiated once and reused unchanged across every task family and every graph construction evaluated on that dataset. Table~\ref{tab:sage-hparams-fraud-tabular} reports the architecture and configuration used for the transactional (fraud) and tabular i.i.d.\ (TabArena) setups, which share the same \texttt{HeteroSAGE} implementation. For RelBench tasks we use RelBench's own reference \texttt{HeteroGraphSAGE} implementation and hyperparameters, reported separately in Table~\ref{tab:sage-hparams-relational}, since its encoder and message-passing modules differ from ours and are not directly comparable field-by-field.

Hyperparameters are not selected per construction: the validation set is used only for early stopping, checkpointing on the epoch with the best validation score, under a fixed architecture and fixed hyperparameters, not for a hyperparameter search re-run per construction. No construction receives more tuning than another. 

\begin{table}[t]
\centering
\caption{GraphSAGE (\texttt{HeteroSAGE}) hyperparameters, Transactional (fraud) vs.\ Tabular IID (TabArena) setups. Both use the same architecture (\texttt{NodeEncoder} + \texttt{HeteroSAGE} + \texttt{MLPHead}).}
\label{tab:sage-hparams-fraud-tabular}
\begin{tabular}{lcc}
\toprule
Hyperparameter & Transactional & Tabular IID \\
\midrule
Hidden dimension              & 256 & 256 \\
Layers                        & 4 & 4 \\
Head layers                   & 2   & 2   \\
Dropout                       & 0.15 & 0.15 \\
Message aggregation (within relation) & sum   & sum   \\
Relation aggregation (across relations) & cat & cat \\
Residual                      & True   & True   \\
Normalization                  & layer   & layer   \\
Learning rate                  & 2e-4 & 2e-4 \\
Weight decay                   & 1e-5   & 1e-5   \\
Epochs (max)                   & 200 & 200 \\
Patience                       & 20 & 20 \\
Gradient clip                  & 1.0   & 1.0   \\
Selection metric                & AP   & AUC \\
Optimizer                       & AdamW & AdamW \\
\bottomrule
\end{tabular}
\end{table}

\begin{table}[t]
\centering
\caption{GraphSAGE hyperparameters, Relational (RelBench) setup. Uses
RelBench's \texttt{HeteroEncoder} + \texttt{HeteroTemporalEncoder} +
\texttt{HeteroGraphSAGE} + single-layer \texttt{MLP} head}
\label{tab:sage-hparams-relational}
\begin{tabular}{lc}
\toprule
Hyperparameter & Relational \\
\midrule
Hidden dimension / channels    & 128 \\
Layers                          & 4 \\
Head layers                     & 1 \\
Message aggregation              & sum \\
Normalization                     & batch\_norm \\
Learning rate                      & 5e-3 \\
Epochs (max)                         & 200 \\
Patience / early stopping             & 10\\
Temporal encoder                           &\texttt{HeteroTemporalEncoder} \\
Optimizer                                   & Adam \\
\bottomrule
\end{tabular}
\end{table}

\subsubsection{Transactional fraud detection}
From the \emph{FDB: Fraud Dataset Benchmark} we use all available real datasets with at least 3 categorical attributes. This rules out 5 of the 9 available datasets, from which 4 have two or less categorical variables (and hence the graph-construction-via-selection effect is limited) or are simulated (particularly ``Simulated Credit Card Transactions generated using Sparkov'' (\texttt{sparknov})). Table \ref{tab:results-transactions-full} show the validation and test results of the fixed graph constructions and the different \autograble configurations. 

% Requires: \usepackage{booktabs, graphicx, multirow}
\begin{table*}[htbp]
\centering
\caption{\autograble performance (AUC) comparison across encoding/direction/$\lambda$ configurations on transactional (FDB) datasets. Results averaged over $N{=}15$ seeds. We note that \texttt{fraudecom} is a particularly challenging task, where no configuration (nor other methods reported in the official benchmark leader board) beats near-random performance.}
\label{tab:results-transactions-full}
\resizebox{0.85\textwidth}{!}{%
\begin{tabular}{ll c cc cc cc cc}
\toprule
\multirow{2}{*}{Encoding} & \multirow{2}{*}{Direction} & \multirow{2}{*}{$\lambda$}
 & \multicolumn{2}{c}{\texttt{fraudecom}} & \multicolumn{2}{c}{\texttt{twitterbot}} & \multicolumn{2}{c}{\texttt{vehicleloan}} & \multicolumn{2}{c}{\texttt{fakejob}} \\
\cmidrule(lr){4-5}\cmidrule(lr){6-7}\cmidrule(lr){8-9}\cmidrule(lr){10-11}
 & & & Val & Test & Val & Test & Val & Test & Val & Test \\
\midrule
\multicolumn{3}{l}{Trivial ($\gamma_{\mathrm{triv}}$)}
 & 0.509 &0.499& 0.866 & 0.864 & 0.646 & 0.647 & 0.925 & 0.911 \\
\multicolumn{3}{l}{Full incidence ($\gamma_{\mathrm{inc}}$)}
 & 0.511 & 0.501 & 0.821 & 0.831 & 0.630 & 0.619& \bf 0.938 & \bf 0.951 \\
\midrule
Values    & Backward & 0 & 0.511          & 0.492 & 0.837          & 0.831 & 0.647          & 0.642 & 0.930     & 0.929 \\
Values    & Backward & 1 & 0.502          & 0.507 & 0.837          & 0.831 & 0.649          & 0.645 & 0.923     & 0.910 \\
Values    & Forward  & 0 & 0.525          & 0.516 & \textbf{0.916} & 0.919 & 0.653          & 0.649 & 0.917     & 0.897 \\
Values    & Forward  & 1 & 0.503          & 0.497 & 0.914          & 0.917 & 0.646          & 0.645 & 0.923     & 0.910 \\
\midrule
Frequency & Backward & 0 & 0.525          & 0.513 & 0.837          & 0.831 & 0.642          & 0.644 & \bf 0.938 & \bf 0.951 \\
Frequency & Backward & 1 & 0.525          & 0.513 & 0.912          & 0.911 & 0.646          & 0.647 & 0.921     & 0.907 \\
Frequency & Forward  & 0 & \textbf{0.527} & 0.513 & 0.893          & 0.880 & 0.646          & 0.647 & 0.922     & 0.907 \\
Frequency & Forward  & 1 & \textbf{0.527} & 0.513 & 0.913          & 0.908 & \textbf{0.676} & 0.662 & 0.925     & 0.911 \\
\bottomrule
\end{tabular}}
\end{table*}

\subsubsection{Tabular datasets results}
\label{app:tabarena}
Table~\ref{tab:results-tabarena-full} reports the performance of $\gamma_{\mathrm{triv}}$, $\gamma_{\mathrm{inc}}$, and the different
configurations of \autograble on a selected subset of the i.i.d.\ tabular benchmark TabArena~\cite{tabarena}. The datasets are selected for having more than 8 categorical features and a binary classification target. 

% Requires: \usepackage{booktabs, graphicx, multirow}

% Requires: \usepackage{booktabs, graphicx, multirow}
\begin{table*}[htbp]
\centering
\caption{TabArena (i.i.d.\ single-table) results. AUC averaged over $N{=}15$ seeds.
Baselines above the rule; \autograble\ configurations below, with bold marking the
best validation score among them. A $^{*}$ marks a configuration that selects no
columns and therefore returns $\gamma_{\mathrm{triv}}$; a $^{\circ}$ marks one that
selects all of them and therefore returns $\gamma_{\mathrm{inc}}$. Since each label
depends on its own row, no constructor is expected to improve on
$\gamma_{\mathrm{triv}}$.}
\label{tab:results-tabarena-full}
\resizebox{\textwidth}{!}{%
\begin{tabular}{ll c cc cc cc cc cc}
\toprule
\multirow{2}{*}{Encoding} & \multirow{2}{*}{Direction} & \multirow{2}{*}{$\lambda$}
 & \multicolumn{2}{c}{\texttt{credit-g}} & \multicolumn{2}{c}{\texttt{good-customer}} & \multicolumn{2}{c}{\texttt{mkt\_campaign}} & \multicolumn{2}{c}{\texttt{NATICUSdroid}} & \multicolumn{2}{c}{\texttt{qsar-biodeg}} \\
\cmidrule(lr){4-5}\cmidrule(lr){6-7}\cmidrule(lr){8-9}\cmidrule(lr){10-11}\cmidrule(lr){12-13}
 & & & Val & Test & Val & Test & Val & Test & Val & Test & Val & Test \\
\midrule
\multicolumn{3}{l}{Trivial ($\gamma_{\mathrm{triv}}$)}
 & 0.769 & 0.764 & 0.683 & 0.762 & 0.908 & 0.886 & 0.984 & 0.983 & 0.688 & 0.715 \\
\multicolumn{3}{l}{Full incidence ($\gamma_{\mathrm{inc}}$)}
 & 0.800 & 0.720 & 0.694 & 0.740 & 0.839 & 0.776 & 0.985 & 0.982 & 0.693 & 0.724 \\
\midrule
Values    & Backward & 0 & \bf 0.807 & 0.739 & 0.692 & 0.756 & 0.917 & 0.886 & 0.984 & 0.983 & 0.689 & 0.724 \\
Values    & Backward & 1 & \bf 0.807 & 0.739 & 0.692 & 0.756 & \bf 0.918 & 0.888 & 0.984 & 0.983 & 0.696 & 0.714 \\
Values    & Forward  & 0 & 0.767 & 0.774 & 0.683$^{*}$ & 0.762 & 0.903 & 0.887 & 0.984 & 0.984 & 0.693$^{\circ}$ & 0.724 \\
Values    & Forward  & 1 & 0.769 & 0.764 & 0.683$^{*}$ & 0.762 & 0.909 & 0.886 & \bf 0.985 & 0.984 & 0.693$^{\circ}$ & 0.724 \\
\midrule
Frequency & Backward & 0 & \bf 0.807 & 0.739 & 0.694$^{\circ}$ & 0.740 & 0.913 & 0.880 & 0.983 & 0.982 & 0.693 & 0.725 \\
Frequency & Backward & 1 & \bf 0.807 & 0.739 & \bf 0.699 & 0.766 & 0.909 & 0.883 & 0.983 & 0.986 & 0.693 & 0.715 \\
Frequency & Forward  & 0 & 0.769$^{*}$ & 0.764 & 0.694 & 0.737 & 0.908$^{*}$ & 0.886 & 0.984$^{*}$ & 0.985 & 0.688$^{*}$ & 0.715 \\
Frequency & Forward  & 1 & 0.769$^{*}$ & 0.764 & 0.683$^{*}$ & 0.762 & 0.908$^{*}$ & 0.886 & 0.984$^{*}$ & 0.985 & \bf 0.705 & 0.714 \\
\bottomrule
\end{tabular}}
\end{table*}

Here the labels are allegedly i.i.d.\ at the row level, so the informative outcome should be that no structure is selected. In every task at least one \autograble configuration returns the empty column set and so reduces to $\gamma_{\mathrm{triv}}$, and in four of five at least two do. All ten such cases are forward search, and eight of them are frequency-encoded forward search: starting from the empty set, no column offers a validation gain that covers the fragmentation it adds. Building structure
anyway is not neutral. Full incidence gains on validation and loses on test
relative to $\gamma_{\mathrm{triv}}$ ($0.800\!\to\!0.720$ on \texttt{credit-g},
$0.839\!\to\!0.776$ on \texttt{mkt\_campaign}, against drops of $0.005$ and $0.022$), which is the fragmentation-driven overfitting that $\Omega$ is meant to detect.

\subsubsection{Relational databases results}

Table \ref{tab:full-relbench} %and Table \ref{tab:full-relbench-more}
shows results of \autograble and the other baselines in both validation and test set of Relbench tasks. 

% Requires: \usepackage{booktabs, graphicx}  % graphicx for \resizebox
% Optional: \usepackage{adjustbox} for finer width control

\begin{table}[t]
\centering
\caption{Validation and test AUROC ($\uparrow$) under a \emph{fixed} predictor; only the
constructor varies. Mean\,$\pm$\,std over $N{=}15$ seeds. Best in \textbf{bold},
second \underline{underlined}.}
\label{tab:full-relbench}
\resizebox{\textwidth}{!}{%
\begin{tabular}{l cc cc cc}
\toprule
 & \multicolumn{2}{c}{\texttt{driver-top3} (RelBench-F1)} & \multicolumn{2}{c}{\texttt{driver-dnf} (RelBench-F1)} & \multicolumn{2}{c}{\texttt{study-outcome} (RelBench-Trial)} \\
\cmidrule(lr){2-3}\cmidrule(lr){4-5}\cmidrule(lr){6-7}
Constructor & Val & Test & Val & Test & Val & Test \\
\midrule
Trivial ($\gamma_{\mathrm{triv}}$)
  & \pms{0.640}{0.017} & \pms{0.674}{0.011}
  & \pms{0.610}{0.020} & \pms{0.596}{0.026}
  & \pmsB{0.648}{0.019} & \pmsB{0.677}{0.009} \\
REG
  & \pms{0.803}{0.009} & \pms{0.777}{0.012}
  & \pms{0.737}{0.007} & \pms{0.733}{0.013}
  & \pms{0.633}{0.011} & \pms{0.635}{0.006} \\
Full incidence ($\gamma_{\mathrm{inc}}$)
  & \pms{0.762}{0.020} & \pms{0.746}{0.013}
  & \pms{0.698}{0.017} & \pms{0.687}{0.022}
  & \pms{0.628}{0.031} & \pms{0.621}{0.013} \\
Random
  & \pms{0.767}{0.030} & \pms{0.759}{0.025}
  & \pms{0.741}{0.015} & \pms{0.733}{0.012}
  & \pms{0.598}{0.021} & \pms{0.602}{0.022} \\
auGraph
  & \pmsU{0.810}{0.019} & \pmsU{0.791}{0.009}
  & \pms{0.750}{0.014} & \pmsU{0.742}{0.021}
  & \pms{0.617}{0.018} & \pms{0.630}{0.017} \\
\midrule
\autograble
  & \pmsB{0.845}{0.013} & \pmsB{0.803}{0.011}
  & \pmsB{0.770}{0.012} & \pmsB{0.761}{0.021}
  & \pmsB{0.648}{0.019} & \pmsB{0.677}{0.009} \\
%\autograble-forward-values & \pms{0.00}{0.00} & \pms{0.00}{0.00} & \pms{0.00}{0.00} & \pms{0.00}{0.00} & \pms{0.00}{0.00} & \pms{0.00}{0.00} \\
%\autograble-forward-frequencies & \pms{0.00}{0.00} & \pms{0.00}{0.00} & \pms{0.00}{0.00} & \pms{0.00}{0.00} & \pms{0.00}{0.00} & \pms{0.00}{0.00} \\
%\autograble-backward-values & \pms{0.00}{0.00} & \pms{0.00}{0.00} & \pms{0.00}{0.00} & \pms{0.00}{0.00} & \pms{0.00}{0.00} & \pms{0.00}{0.00} \\
%\autograble-backward-frequencies & \pms{0.00}{0.00} & \pms{0.00}{0.00} & \pms{0.00}{0.00} & \pms{0.00}{0.00} & \pms{0.00}{0.00} & \pms{0.00}{0.00} \\
\bottomrule
\end{tabular}}
\end{table}

\end{document}